\newcommand{\rev}[1]{{\color{purple}{#1}}}
\DeclareMathOperator*{\argmax}{argmax}
\DeclareMathOperator*{\argmin}{argmin}
\newcommand{\cO}{\mathcal{O}}
\newtheorem{theorem}{Theorem}
\newtheorem{definition}{Definition}
\newtheorem{corollary}{Corollary}
\newtheorem{lemma}{Lemma}
\journal{Artificial Intelligence}
\begin{document}

\begin{frontmatter}

\title{Analyzing and Enhancing the Backward-Pass Convergence of Unrolled Optimization}

\author{James Kotary}
\ead{jk4pn@virginia.edu}
\author{Jacob K Christopher}
\ead{csk4sr@virginia.edu}
\author{My H Dinh}
\ead{fqw2tz@virginia.edu}
\author{Ferdinando Fioretto}
\ead{fioretto@virginia.edu}

\address{University of Virginia, Charlottesville, Virginia, USA}

\begin{abstract}
The integration of constrained optimization models as components in deep networks has led to promising advances on many specialized learning tasks. 
A central challenge in this setting is backpropagation through the solution of an optimization problem, which often lacks a closed form. One typical strategy is algorithm unrolling, which relies on automatic differentiation through the entire chain of operations executed by an iterative optimization solver.   This paper provides theoretical insights into the backward pass of unrolled optimization, showing that it is asymptotically equivalent to the solution of a linear system by a particular iterative method. Several practical pitfalls of unrolling are demonstrated in light of these insights, and a system called Folded Optimization is proposed to construct more efficient backpropagation rules from unrolled solver implementations. Experiments over various end-to-end optimization and learning tasks demonstrate the advantages of this system both computationally, and in terms of flexibility over various optimization problem forms.  

\noindent\textbf{Code available at:} 
\href{https://fold-opt.github.io}
{https://fold-opt.github.io}

\end{abstract}

\begin{keyword}
Folded optimization \sep deep unrolling \sep decision focused learning  \sep differentiable optimization
\end{keyword}

\end{frontmatter}

\section{Introduction}
\label{sec:introduction}

The integration of optimization problems as components in neural networks has shown to be an effective framework for enforcing structured representations in deep learning. A parametric optimization problem defines a mapping from its unspecified parameters to the resulting optimal solutions, which is treated as a layer of a neural network. By allowing neural networks to learn over the space of parametrized optimal solutions, optimization as a layer can offer enhanced accuracy and efficiency on specialized learning tasks, by imparting predefined task-specific structure to their representations \citep{kotary2021end}.



For example, optimization layers can generalize the functionality of earlier structured prediction mechanisms for tasks such as multilabel classification \citep{martins2016softmax} and learning to rank \citep{adams2011ranking, kotary2022end} using simple optimization models. The integration of operational decision problems as components in neural networks has shown promise in enhancing the effectiveness of data-driven decision models \citep{wilder2019melding}. Some work has even shown that optimization components with learnable constraints can be used as general-purpose layers, capable of greater expressiveness than conventional linear layers \citep{amos2019optnet}.

While these mechanisms can be used in much the same way as linear layers and activation functions, the resulting  end-to-end models require training by stochastic gradient descent, which in turn requires differentiation of the optimization mappings for backpropagation of gradients. This poses unique challenges, partly due to their lack of a closed form, and modern approaches typically follow one of two strategies: In \emph{unrolling}, an optimization algorithm is executed entirely on the computational graph, and backpropagated by automatic differentiation from optimal solutions to the underlying problem parameters. The approach is adaptable to many problem classes, but has been shown to suffer from time and space inefficiency, as well as vanishing gradients \citep{monga2021algorithm}. \emph{Analytical differentiation} is a second strategy that circumvents those issues by forming implicit models for the derivatives of an optimization mapping and solving them exactly. However, this requires construction of the problem-specific implicit derivative models, and the most popular current framework for its automation puts rigid requirements on the form of the optimization problems, relying on transformations to canonical convex cone programs before applying a standardized procedure for their solution and differentiation  \citep{agrawal2019differentiable}.  This precludes the use of specialized solvers that are best-suited to handle various optimization problems, and inherently restricts itself only to convex problems. 

\paragraph{\bf Contributions}
This paper presents a novel analysis of unrolled optimization, which results in efficiently-solvable models for the backpropagation of unrolled optimization. Theoretically, the result is significant because it reveals an equivalence between unrolling and analytical differentiation, and allows for convergence of the backward pass to be analyzed. Practically, it allows for the forward and backward passes of unrolled optimization to be disentangled and solved separately, using generic implementations of specialized algorithms. More specifically, this paper makes the following novel contributions\footnote{This paper is an extended version of IJCAI-23 paper \citep{kotary2023folded}. It expands on the conference version substantially, with new material detailing extensions to the folded optimization library, and a new collection of experiments which study the backward-pass convergence of unrolled optimization and empirically illustrate the improvements therein which are made possible by the paper's proposed framework.}:    

\begin{enumerate}[leftmargin=*, parsep=0pt, itemsep=0pt, topsep=2pt]
\item A theoretical analysis of unrolling, which shows that its backward pass is asymptotically equivalent to the solution of a linear system of equations by a particular iterative method, and allows for its convergence properties to be quantified. 
\item An empirical evaluation of the backward-pass convergence behavior of unrolled optimization, which corroborates several aspects of the aforementioned theory and indicates several potential pitfalls that arise as a result of its naive implementation.
\item Building on this analysis, the paper proposes a system for generating  analytically differentiable optimization solvers from unrolled implementations called \emph{folded optimization}, accompanied by a Python library called \texttt{fold-opt} to facilitate automation, available at:\\ \href{https://fold-opt.github.io}{https://fold-opt.github.io}.
\item The efficiency and modeling advantages of folded optimization are demonstrated on a diverse set of end-to-end optimization and learning tasks, which include end-to-end learning with difficult \emph{nonconvex} optimization.
\end{enumerate}

\section{Setting and Goals}
\label{sec:SettingAndGoals}

In this paper, the goal is to differentiate mappings that are defined as the solution to an optimization problem.  Consider the parameterized problem  (\ref{eq:opt_generic}) which defines a function from a vector of parameters $\mathbf{c} \in \mathbb{R}^p$ to its associated optimal solution $\mathbf{x}^{\star}( \mathbf{c} ) \in \mathbb{R}^n$:     
\begin{subequations} 
    \label{eq:opt_generic}
    \begin{align}
        \mathbf{x}^{\star}(\mathbf{c}) =  \argmin_{\mathbf{x}} &\; f(\mathbf{x},\mathbf{c})  \label{eq:opt_generic_objective}\\        
        \text{subject to: }&\;
        g(\mathbf{x},\mathbf{c}) \leq \mathbf{0},  \label{eq:opt_generic_inequality}\\
        &\; h(\mathbf{x},\mathbf{c}) = \mathbf{0},\label{eq:opt_generic_equality}
    \end{align}
\end{subequations}
in which $f$ is the objective function, and $g$ and $h$ are vector-valued functions capturing the inequality and equality constraints of the problem, respectively. The parameters $\mathbf{c}$ can be thought of as a prediction from previous layers of a neural network, or as learnable parameters analogous to the weights of a linear layer, or as some combination of both.   It is assumed throughout that for any $\mathbf{c}$, the associated optimal solution $\mathbf{x}^{\star}(\mathbf{c})$ can be found by conventional methods, within some tolerance in solver error. 
This coincides with the ``forward pass'' of the mapping in a neural network. \emph{The primary challenge is to compute its backward pass}, which amounts to finding the Jacobian matrix of partial derivatives $\frac{\partial \mathbf{x}^{\star}(\mathbf{c})}{\partial \mathbf{c}}$ .

\paragraph{\bf Backpropagation}  
Given a downstream task loss $\mathcal{L}$, backpropagation through   $\mathbf{x}^{\star}(\mathbf{c})$ amounts to computing $\frac{\partial \mathcal{L}}{\partial \mathbf{c}}$ given $\frac{\partial \mathcal{L}}{\partial \mathbf{x}^{\star}}$.  In deep learning, backpropagation through a layer is typically accomplished by automatic differentiation (AD), which propagates gradients through the low-level operations of an overall composite function by repeatedly applying the multivariate chain rule. This can be performed automatically given a forward pass implementation in an AD library such as PyTorch. However, it requires a record of all the operations performed during the forward pass and their dependencies, known as the \emph{computational graph}.
 
\paragraph{\bf Jacobian-gradient product (JgP)}
The \emph{Jacobian} matrix of the vector-valued function $\mathbf{x}^{\star}(\mathbf{c}): \mathbb{R}^p \rightarrow \mathbb{R}^n$ is a matrix $\frac{\partial \mathbf{x}^{\star}}{\partial \mathbf{c}}$ in $\mathbb{R}^{n \times p}$, whose elements at $(i,j)$  are the partial derivatives $\frac{\partial x^{\star}_i(\mathbf{c})}{\partial c_j}$. When the Jacobian is known, backpropagation through $\mathbf{x}^{\star}(\mathbf{c})$  can be performed  by computing the product
\begin{equation}
\label{eq:jvprod}
  \frac{\partial \mathcal{L}}{\partial \mathbf{c}} = \frac{\partial \mathcal{L}}{\partial \mathbf{x}^{\star}} \cdot \frac{\partial \mathbf{x}^{\star}(\mathbf{c})}{\partial \mathbf{c}}.
 \end{equation}

\begin{figure}[t]
    \centering
    \includegraphics[width=1.0\linewidth]{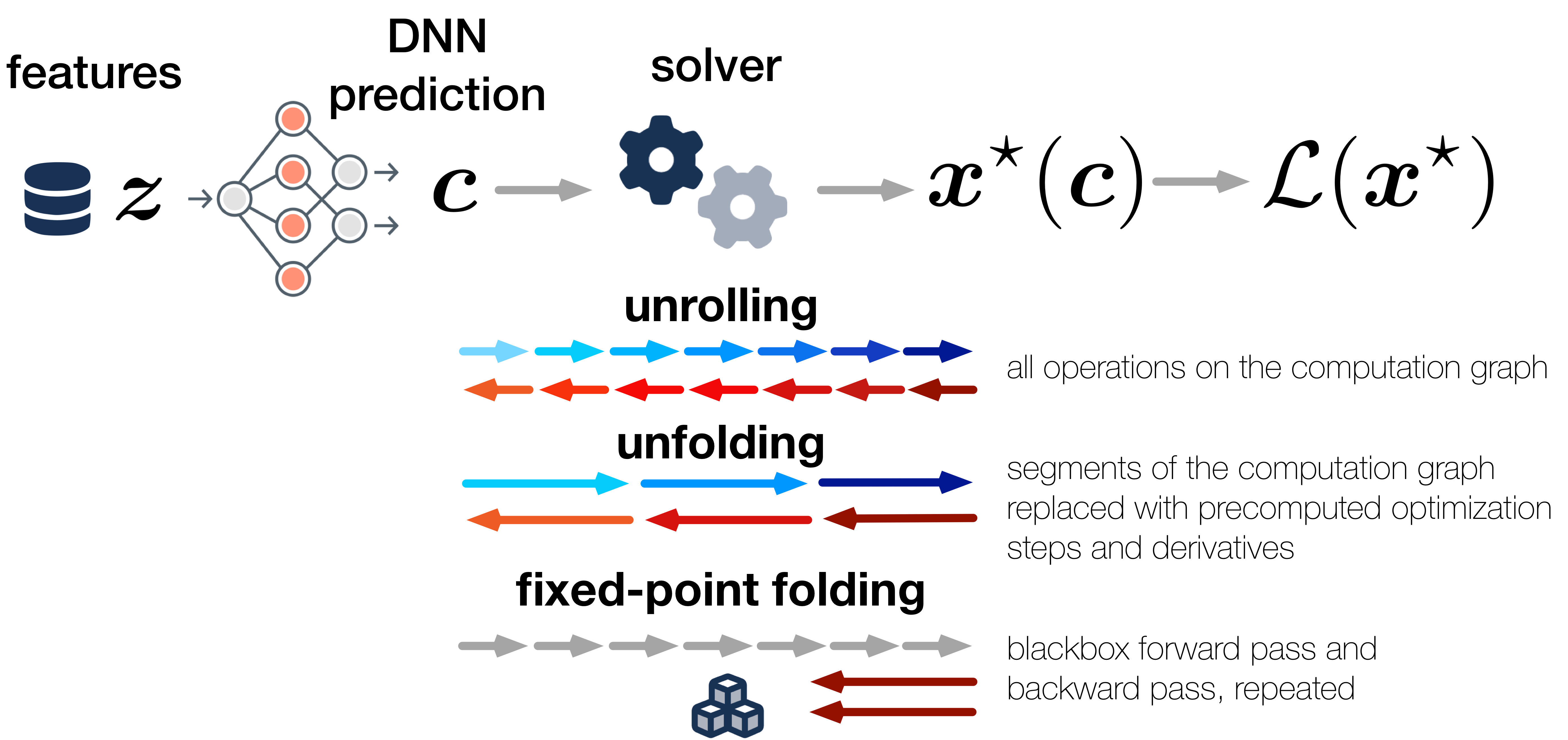}
    \caption{Compared to unrolling, unfolding requires fewer operations on the computational graph by replacing inner loops with Jacobian-gradient products.   Fixed-point folding models the unfolding analytically, allowing for generic implementations.}
    \label{fig:unfolding_scheme}
\end{figure}

\section{Folded Optimization: Overview}
The problem \eqref{eq:opt_generic} is most often solved by iterative methods, which refine an initial \emph{starting point} $\mathbf{x}_0$ by repeated application of a subroutine, which we view as a function. For optimization variables $\mathbf{x} \in \mathbb{R}^n$, the \emph{update function} is  a vector-valued function $\mathcal{U}: \mathbb{R}^n \rightarrow \mathbb{R}^n$:
\begin{equation}
    \label{eq:opt_iteration}\tag{U}
        \mathbf{x}_{k+1}(\mathbf{c}) = \mathcal{U}(\mathbf{x}_k (\mathbf{c}) ,\;  \mathbf{c} ).
\end{equation}
\noindent 
The iterations (\ref{eq:opt_iteration}) \emph{converge} if $\mathbf{x}_{k}(\mathbf{c}) \rightarrow \mathbf{x}^{\star}(\mathbf{c})$ as $k \rightarrow \infty$; in the present paper, this is referred to as \emph{forward-pass convergence}.
When the iterations \eqref{eq:opt_iteration} are \emph{unrolled}, they are computed and backpropagated on the computational graph, and the overall function $\mathbf{x}^{\star}(\mathbf{c})$ is thereby backpropagated by AD without explicitly representing its Jacobian matrix $\frac{\partial \mathbf{x}^{\star}(\mathbf{c})}{\partial \mathbf{c}}$. The backpropagation of the unrolled solution process is also an iterative procedure, and we aim to analyze its convergence. To this end, we define convergence of the backward pass in unrolling  as follows:

\begin{definition}
\label{def:bwd_conv}
Suppose that an unrolled iteration (\ref{eq:opt_iteration}) produces a convergent sequence of solution iterates $\lim_{k \to \infty} \mathbf{x}_k  = \mathbf{x}^{\star}$ in its forward pass. Then convergence of the backward pass is defined as the condition 
\begin{equation}
\label{eq:limit-x-diff}
\lim_{k \to \infty} \frac{\partial \mathbf{x}_k}{\partial \mathbf{c}}(\mathbf{c})  = \frac{\partial \mathbf{x}^{\star}}{\partial \mathbf{c}}(\mathbf{c}),
\end{equation}
assuming that all requisite derivatives exist.
\end{definition}

Unrolling \eqref{eq:opt_iteration} over many iterations often faces time and space inefficiency issues due to the need for storage and traversal of the optimization procedure's entire computational graph  \cite{monga2021algorithm}. The following sections analyze the backward pass of unrolled optimization to yield equivalent analytical models for the Jacobian $\frac{\partial \mathbf{x}^{\star}}{\partial \mathbf{c}}(\mathbf{c})$, and show how to efficiently solve those models by leveraging the backward pass of \eqref{eq:opt_iteration}. To do so, we recognize two key challenges in modeling the backward pass of unrolling iterations \eqref{eq:opt_iteration}. First, it often happens that evaluation of $\mathcal{U}$ in \eqref{eq:opt_iteration} requires the solution of another optimization subproblem, such as a projection or proximal operator, which must also be unrolled. Section \ref{sec:unfolding} introduces \emph{\textbf{unfolding}} as a variant of unrolling, in which the unrolling of such inner loops is circumvented by analytical differentiation of the subproblem, allowing its analysis to be confined to a single unrolled loop. 

Second, the backward pass of an unrolled solver is determined by its forward pass, whose trajectory depends on its (potentially arbitrary) starting point and the convergence properties of the chosen algorithm. Section \ref{sec:Unfolding_at_a_fixed_point} shows that the backward pass converges correctly even when the forward-pass iterations are initialized at the precomputed optimal solution. This allows for separation of the forward and backward passes, which are typically entangled across unrolled iterations, greatly simplifying the backward pass model and allowing for generic implementations of both passes.

Section \ref{sec:Unfolding_at_a_fixed_point} uses these concepts to show that the backward pass of unfolding \eqref{eq:opt_iteration} follows the solution, by fixed-point iteration, of the linear system for $\frac{\partial \mathbf{x}^{\star}(\mathbf{c})}{\partial \mathbf{c}}$ which arises by differentiating the fixed-point conditions of \eqref{eq:opt_iteration}. This allows for the conditions and rate of its convergence to be analyzed. Section \ref{sec:Experiments} then outlines \emph{\textbf{folded optimization}}, a system for generating Jacobian-gradient products through optimization based on efficient solution of the models proposed in Section \ref{sec:Unfolding_at_a_fixed_point}. The main differences between unrolling, unfolding, and folded optimization are illustrated in Figure \ref{fig:unfolding_scheme}.

\section{Related Work}
\label{sec:RelatedWork}
We end-to-end optimization and learning approaches into those based on \emph{unrolling}, and \emph{analytical} differentiation. Since this paper focuses on converting unrolled implementations into analytical ones, each category is reviewed first below.

\paragraph{\bf Unrolling optimization algorithms}
Automatic Differentiation (AD) is  the primary method of backpropagating gradients in deep learning models for training with stochastic gradient descent.  Modern machine learning frameworks such as PyTorch have natively implemented differentiation rules for a variety of functions that are commonly used in deep models, as well as interfaces to define custom differentiation rules for new functions \citep{paszke2017automatic}. As a mainstay of deep learning, AD is also a natural tool for backpropagating through constrained optimization mappings. \emph{Unrolling} refers to the execution of an optimization algorithm, entirely on the computational graph, for backpropagation by AD from the resulting optimal solution to its input parameters. Such approaches are general and apply to a broad range of optimization models. They  can be performed simply by implementing a solution algorithm within an AD framework, without the need for analytical modeling of an optimization mapping's derivatives \citep{domke2012generic}. However, unrolling over many iterations has been shown to encounter issues of time and memory inefficiency due to the size of its computational graph \citep{amos2019optnet}. Further issues encountered in unrolling, such as vanishing and exploding gradients, are reminiscent of recurrent neural networks \citep{monga2021algorithm}. On the other hand, unrolling may offer some unique practical advantages, like the ability to learn optimization parameters such as stepsizes to accelerate the solution of each optimization during training \citep{shlezinger2022model}.

\paragraph{\bf Analytical differentiation of optimization models}

Differentiation through constrained argmin problems  in the context of machine learning was discussed as early as \cite{gould2016differentiating}, who proposed first to implicitly differentiate the argmin of a smooth, unconstrained convex function by its first-order optimality conditions, defined when the gradient of the objective function equals zero. This technique is then extended to find approximate derivatives for constrained problems, by applying it to their unconstrained log-barrier approximations. Subsequent approaches applied implicit differentiation to the KKT optimality conditions of constrained problems directly \citep{amos2019optnet,amos2019limited}, but only on special problem classes such as  Quadratic Programs. \cite{konishi2021end} extend the method of \cite{amos2019optnet}, by modeling second-order derivatives of the optimization for training with gradient boosting methods. \cite{donti2017task} uses the differentiable quadratic programming solver of \citep{amos2019optnet} to approximately differentiate general convex programs through quadratic surrogate problems.   Other problem-specific approaches to analytical differentiation models include ones for sorting and ranking \citep{blondel2020fast}, linear programming \citep{mandi2020interior}, and convex cone programming \citep{agrawal2019differentiating}.

The first general-purpose differentiable optimization solver was proposed in \cite{agrawal2019differentiable}, which leverages the fact that any convex program can be converted to a convex cone program \citep{nemirovski2007advances}. The equivalent cone program is subsequently solved and differentiated following \cite{agrawal2019differentiating}, which implicitly differentiates a zero-residual condition representing optimality \citep{busseti2019solution}. A differentiable solver library \texttt{cvxpy} is based on this approach, which converts convex programs to convex cone programs by way of their graph implementations as described in \cite{grant2008graph}. The main advantage of the system is that it applies to any convex program and has a simple symbolic interface. A major disadvantage is its restriction to solving problems only in a standard convex cone form with an ADMM-based conic programming solver, which performs poorly on some problem classes, as seen in Section \ref{sec:Experiments}. 

A related line of work concerns end-to-end learning with \emph{discrete} optimization problems, which includes linear programs, mixed-integer programs and constraint programs. These problem classes often define discontinuous mappings with respect to their input parameters, making their true gradients unhelpful as descent directions in  optimization. Accurate end-to-end training can be achieved by \emph{smoothing} the optimization mappings, to produce approximations which yield more useful gradients. A common approach is to augment the objective function with smooth regularizing terms such as euclidean norm or entropy functions \citep{wilder2019melding,ferber2020mipaal,mandi2020interior}. Others show that similar effects can be produced by applying random noise to the objective \citep{berthet2020learning,paulus2020gradient}, or through finite difference approximations \citep{poganvcic2019differentiation,sekhar2022gradient}. This enables end-to-end learning with discrete structures such as constrained ranking policies \citep{kotary2022end}, shortest paths in graphs \citep{elmachtoub2020smart}, and various decision models \citep{wilder2019melding}.

\section{From Unrolling to Unfolding}
\label{sec:unfolding}
For many optimization algorithms of the form (\ref{eq:opt_iteration}), the update function $\mathcal{U}$ is composed of closed-form functions that are relatively simple to evaluate and differentiate. In general though, $\mathcal{U}$ may itself employ an optimization subproblem that is nontrivial to differentiate. That is, 
\begin{equation}
    \label{eq:inner_optimization}\tag{O}
        \mathcal{U} ( \mathbf{x}_k   ) \coloneqq {\mathcal{T}} \left( \;  \mathcal{O}(\mathcal{S}(\mathbf{x}_k)), \;\;  \mathbf{x}_k  \;\right),
\end{equation}
\noindent 
wherein the differentiation of $\mathcal{U}$ is complicated by an \emph{inner optimization} sub-routine $\mathcal{O}: \mathbb{R}^n \rightarrow \mathbb{R}^n$. 
Here, $\mathcal{S}$ and $\mathcal{T}$ represent any steps preceding or following the inner optimization (such as gradient steps), viewed as closed-form functions. In such cases, unrolling (\ref{eq:opt_iteration}) would also require unrolling  $\mathcal{O}$. If the Jacobians of $\mathcal{O}$ can be found, then backpropagation through $\mathcal{U}$ can be completed, free of unrolling, by applying a chain rule through Equation (\ref{eq:inner_optimization}), which in this framework is handled naturally by automatic differentiation of $\mathcal{T}$ and $\mathcal{S}$.

Then, only the outermost iterations (\ref{eq:opt_iteration}) need be unrolled on the computational graph for backpropagation. This partial unrolling, which allows for backpropagating large segments of computation at a time by leveraging analytically differentiated subroutines, is henceforth referred to as \emph{unfolding}. It is made possible when the update step $\mathcal{U}$ is easier to differentiate than the overall optimization mapping $\mathbf{x}^{\star}(\mathbf{c})$.
\begin{definition}[Unfolding]
\label{def:unfolding}
An \emph{unfolded} optimization of the form (\ref{eq:opt_iteration}) is one in which the backpropagation of $\mathcal{U}$ at each step does not require unrolling an iterative algorithm.
\end{definition}
Unfolding is distinguished from more general unrolling by the presence of only a single unrolled loop. This definition sets the stage for Section \ref{sec:FoldedOptimization}, which shows how the backpropagation of an unrolled loop can be modeled with a Jacobian-gradient product. Thus, unfolded optimization is a precursor to the complete replacement of backpropagation through loops in unrolled solver implementations by JgP. 

When $\mathcal{O}$ has a closed form and does not require an iterative solution, the definitions unrolling and unfolding coincide. When $\mathcal{O}$ is nontrivial to solve but has known Jacobians, they can be used to produce an unfolding of (\ref{eq:opt_iteration}). Such is the case when $\mathcal{O}$ is a Quadratic Program (QP); a JgP-based differentiable QP solver called \texttt{qpth} is provided by  \cite{amos2019optnet}. Alternatively, the replacement of unrolled loops by JgP's proposed in Section \ref{sec:FoldedOptimization} can be applied recursively $\mathcal{O}$.

\begin{figure}[t]
    \centering
    \includegraphics[width=0.70\linewidth]{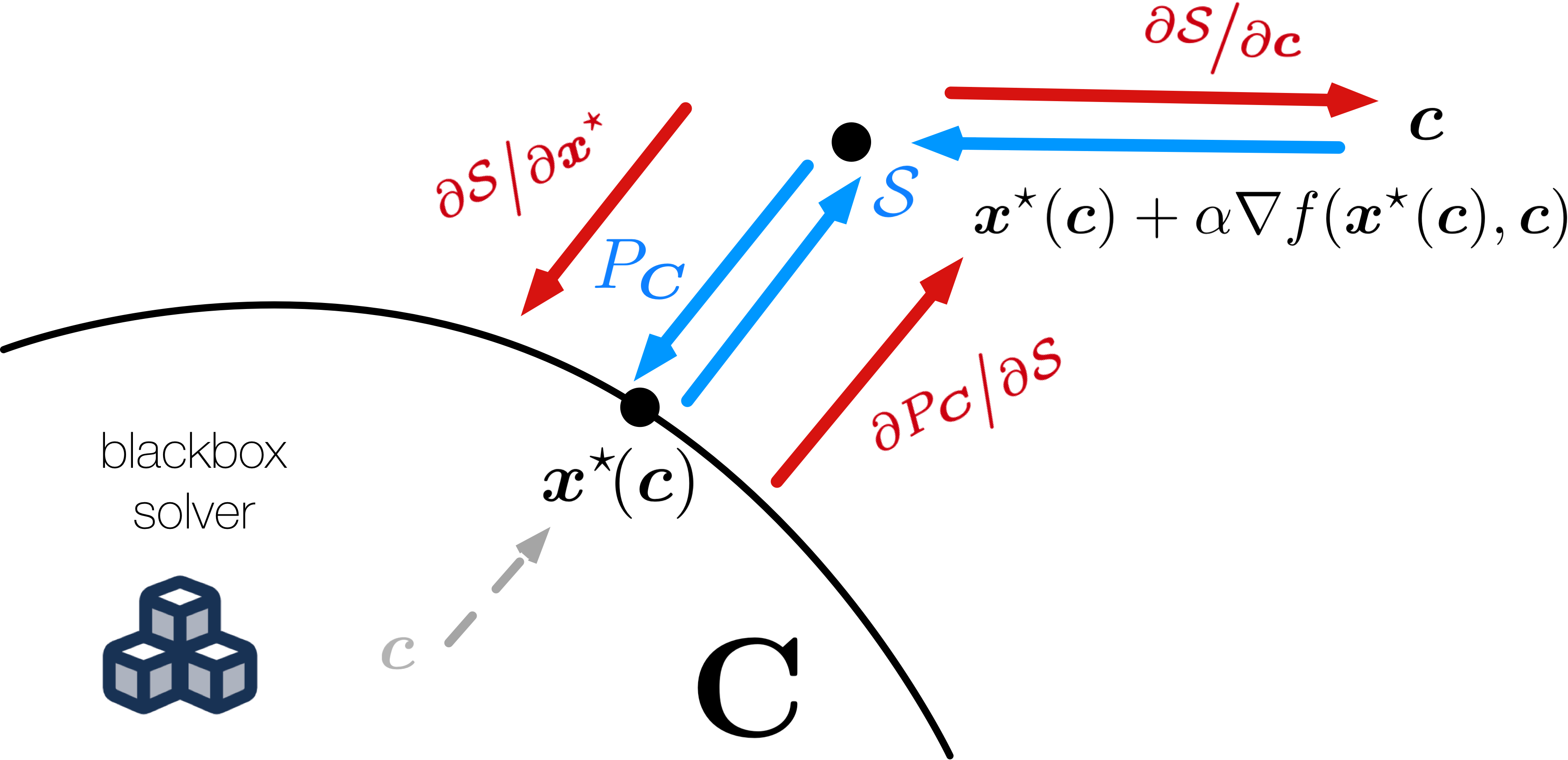}
    \caption{Unfolding Projected Gradient Descent at $\mathbf{x}^{\star}$ consists of alternating gradient step $\mathcal{S}$ with projection $\mathcal{P}_{\mathbf{C}}$. Section \ref{sec:Unfolding_at_a_fixed_point} shows that the resulting chain of JgP operations in backpropagation is equivalent to solving the differential fixed-point conditions \eqref{eq:Lemma_fixedpt} by linear fixed-point iteration. Each function's forward and backward pass are illustrated in blue and red, respectively.  }
    \label{fig:unfold_pgd}
\end{figure}

These concepts are illustrated in the following examples, highlighting the roles of $\mathcal{U}$, $\mathcal{O}$  and $\mathcal{S}$. Each will be used to create folded optimization mappings for a variety of learning tasks in Section \ref{sec:Experiments}.

\paragraph{Projected gradient descent}
Given a problem 
\begin{equation}
\label{eq:problem_pgd}
        \min_{\mathbf{x} \in \mathbf{C}} \; f(\mathbf{x})   
\end{equation}
where $f$ is differentiable and $\mathbf{C}$ is the feasible set, Projected Gradient Descent (PGD) follows the update function
\begin{equation}
\label{eq:update-pgd}
        \mathbf{x}_{k+1} =   \mathcal{P}_{\mathbf{C}}( \mathbf{x}_k - \alpha_k \nabla f (\mathbf{x}_k) ),
\end{equation}
where $\mathcal{O} = \mathcal{P}_{\mathbf{C}}$ is the Euclidean projection onto $\mathbf{C}$, and $\mathcal{S}(\mathbf{x}) = \mathbf{x} - \alpha \nabla f (\mathbf{x}) $ is a gradient descent step. Many simple $\mathbf{C}$ have closed-form projections to facilitate unfolding of (\ref{eq:update-pgd}) (see \citep{beck2017first}). Further, when $\mathbf{C}$ is linear, $\mathcal{P}_\mathbf{C}$ is a quadratic programming (QP) problem for which a differentiable solver \texttt{qpth} is available from \cite{amos2019optnet}.

Figure \ref{fig:unfold_pgd} shows one iteration of unfolding projected gradient descent, with  the forward and backward pass of each recorded operation on the computational graph illustrated in blue and red, respectively. 

\paragraph{Proximal gradient descent}
More generally, to solve 
\begin{equation}
\label{eq:problem_prox}
        \min_{\mathbf{x}} \; f(\mathbf{x}) + g(\mathbf{x}) 
\end{equation}
where $f$ is differentiable and $g$ is a closed convex function, proximal gradient descent follows the update function
\begin{equation}
\label{eq:update-prox}
        \mathbf{x}_{k+1} =   \operatorname{Prox}_{\alpha_k g}
        \left(\mathbf{x}_k - \alpha_k \nabla f (\mathbf{x}_k) \right).
\end{equation}
Here $\mathcal{O}$ is the proximal operator, defined as
\begin{equation}
\label{eq:def-prox}
\operatorname{Prox}_{g}(\mathbf{x}) = \argmin_{y} 
\left\{g(\mathbf{y}) + \frac{1}{2} 
\| \mathbf{y} - \mathbf{x}  \|^2 \right\},
\end{equation}
and its difficulty depends on $g$. Many simple proximal operators can be represented in closed form and have simple derivatives. 
For example, 
when $g(\mathbf{x}) = \lambda \|\mathbf{x}\|_1$, then $\operatorname{Prox}_{g} = \mathcal{T}_{\lambda}(\mathbf{x})$ is the soft thresholding operator, whose closed-form formula and derivative are given in \ref{appendix:models}.

\paragraph{Sequential quadratic programming}
Sequential Quadratic Programming (SQP)  solves the general optimization problem (\ref{eq:opt_generic}) by approximating it at each step by a QP problem, whose objective is a second-order approximation of the problem's Lagrangian function, subject to a linearization of its constraints. SQP is well-suited for  unfolded optimization, as it can solve a broad class of convex and nonconvex problems and can readily be unfolded by implementing its QP step (shown in  \ref{appendix:models}) 
with the  \texttt{qpth} differentiable QP solver.

\paragraph{Quadratic programming by ADMM}
The  QP solver of \cite{boyd2011distributed}, based on the alternating direction of multipliers, is specified in  \ref{appendix:models}. 
Its inner optimization step $\mathcal{O}$ is a simpler equality-constrained QP; its solution is equivalent to solving a linear system of equations, which has a simple derivative rule in PyTorch.

\smallskip
Given an unfolded QP solver by ADMM, its unrolled loop can be replaced with backpropagation by JgP as shown in Section \ref{sec:FoldedOptimization}. The resulting  differentiable QP solver can then take the place of \texttt{qpth} in the examples above. Subsequently, \emph{this technique can be applied recursively} to the resulting unfolded PGD and SQP solvers. This exemplifies the intermediate role of unfolding in converting unrolled, nested solvers to fully JgP-based implementations, detailed in Section~\ref{sec:Experiments}.

From the viewpoint of unfolding, the analysis of backpropagation in unrolled solvers can be simplified by accounting for only a single unrolled loop at a time. The next section identifies a further simplification: \emph{that the backpropagation of an unfolded solver can be completely characterized by its action at a fixed point of the solution's algorithm.} 

\section{Unfolding at a Fixed Point}
\label{sec:Unfolding_at_a_fixed_point}

Optimization methods of the form  (\ref{eq:opt_iteration}) require a  starting point $\mathbf{x}_0$, which is often chosen arbitrarily, since (forward-pass) convergence $\mathbf{x}_k \to \mathbf{x}^{\star}$  is typically ensured regardless of  $\mathbf{x}_0$. In unfolded optimization, it is natural to also ask how the choice of $\mathbf{x}_0$ affects the backward-pass convergence. Here, the special case when $\mathbf{x}_0 = \mathbf{x}^{\star}$ is of particular interest. In this case, the forward pass of an unrolled optimization is equivalent to an identity function at each iteration, since $\mathbf{x}^{\star}$ is a fixed point of $\cal U$.  Therefore if the backward pass  converges in this case (as per Definition \ref{def:bwd_conv}), it can be considered as a standalone procedure, independent of the optimization method's forward pass. This separation of the forward and backward passes is key to an enhanced system of backpropagation, called \emph{folded optimization}, introduced in Section \ref{sec:FoldedOptimization}.

In this section, we first demonstrate empirically that the backward pass of unfolded optimization does in fact converge, resulting in correct gradients when the starting point $\mathbf{x}_0 \to \mathbf{x}^{\star}$ is chosen. Then a theoretical analysis is presented, which shows that the principle holds in general, and allows the backward-pass convergence to be analyzed. A more thorough empirical study is then discussed, which corroborates the main theoretical results in practice, while illustrating the disadvantages and potential pitfalls inherent to naive unfolding implementations. Section \ref{sec:FoldedOptimization} then shows how the results of this Section form the basis of folded optimization, a more efficient and reliable system for backpropagation which builds on the idea of unfolding at a precomuputed fixed point.

\begin{figure}
    \centering
    \includegraphics[width=0.99\linewidth]{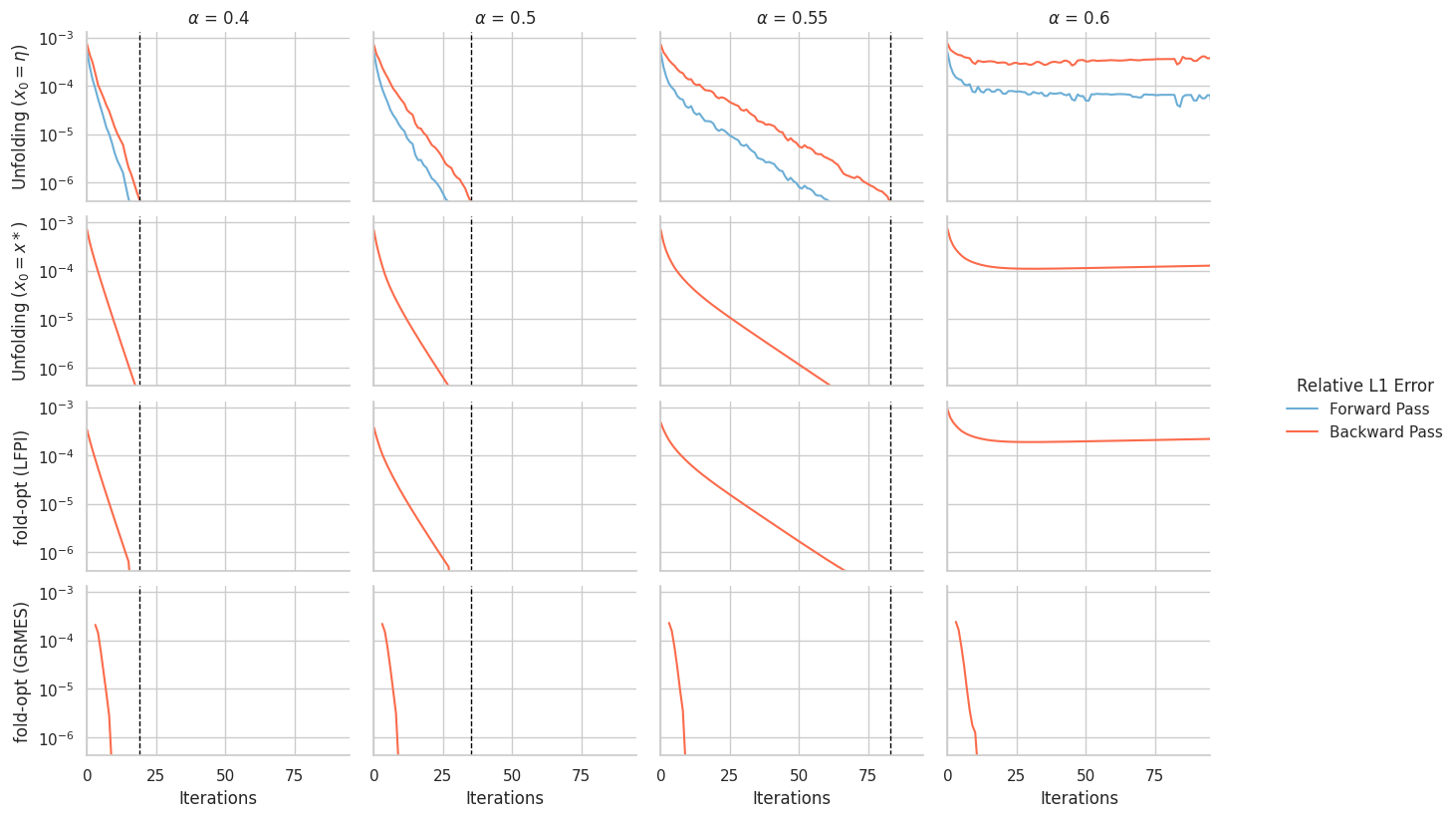} 
    \\
    \hfill
    \caption{Forward and backward pass error per number of iteratons, across different stepsizes on CIFAR100 Multilabel Classification. Error is measured on average over $100$ samples. Each row represents a distinct differentiable solver implementation; the first two represent unfolded PGD and the latter two represent folded optimization counterparts. Columns correspond to PGD stepsize. }
    \label{fig:fwd_bwd_err}
\end{figure}

\subsection{An illustrative example} 
\label{sec:illustrative_example}
The empirical results of this section are based on a representative example problem, in which the forward and backward pass errors are measured at each iteration of an unfolded solver.  The optimization problem (\ref{eq:topk-lp}) maps feature embeddings $\bf{c}$ to smoothed top-$k$ class indicators $\bf{x}^{\star}$, and is used to learn multilabel classification later in Section \ref{sec:Experiments}. Unfolded projected gradient descent is used to differentiably compute the mapping $\mathbf{c} \rightarrow \bf{x}^{\star}(\mathbf{c})$, in which the projection onto linear constraints is computed and backpropagated using the differentiable \texttt{qpth} QP solver.
A loss function $\mathcal{L}$ targets ground-truth top-$k$ indicators, and the result of the backward pass is an estimate of the gradient $\frac{\partial \mathcal{L}}{\partial \mathbf{c}}$.  
We evaluate the forward and backward-pass convergence of unfolded projected gradient descent, by measuring the relative $L_1$ errors of the forward and backward passes, relative to the true optimal solutions and corresponding loss gradients. 

Two types of starting points are considered: the precomputed optimal solution $\mathbf{x}^a_0 = \mathbf{x}^{\star}$, and a uniform random vector $\mathbf{x}^b_0 = \mathbf{\eta} \sim \mathbf{U}(0,1)$. In the latter case, the error is reported on average over $20$ random starting points. The former case is illustrated in Figure \ref{fig:unfold_pgd}, in which $\mathbf{x}_k$  remains stationary at each step of projected gradient descent in its forward pass.  In addition, four different \emph{fixed} gradient stepsizes $\alpha \in \{ 0.4, 0.5, 0.55, 0.6 \}$ are considered. Figure \ref{fig:fwd_bwd_err} plots the relative $L_1$ errors of the forward pass (in blue) and backward pass (in red) for $0 \leq n \leq 100$ iterations of unfolded PGD under the two starting points and various stepsizes. The first two rows correspond to unfolding PGD in the cases $\mathbf{x}_0 = \mathbf{\eta}$ and $\mathbf{x}_0 = \mathbf{x}^{\star}$; the last two rows correspond to two folded optimization variants and are discussed later. 

The absence of blue curves in the case $\mathbf{x}_0 = \mathbf{x}^{\star}$ indicates that when starting the unfolding from the precomputed optimal solution, the forward pass error remains near zero. This behavior is expected, since $\mathbf{x}^{\star}(\mathbf{c}) \!=\! \mathcal{U}( \mathbf{x}^{\star}(\mathbf{c}), \mathbf{c} )$ is a fixed point of (\ref{eq:opt_iteration}). On the other hand, the figure's red curves show that for each chosen stepsize $\alpha$, the backward pass of unfolding converges whenever the forward pass converges, whether it is initialized at a fixed point or a random point. Further, the rate of backward-pass convergence is highly dependent on the chosen stepsize $\alpha$, even when $\bf{x}_0 = \bf{x}^{\star}$. The case $\alpha = 0.6$ indicates a critical point, beyond which the unfolding fails to converge. 

We observe from Figure \ref{fig:fwd_bwd_err} two major trends in the convergence patterns of the backward pass: \textbf{(1)} The vertical dotted lines of Figure \ref{fig:fwd_bwd_err} aid comparison against time to convergence in the backward pass which results from the choice of $\bf{x}_0 = \mathbf{\eta}$. Backward-pass convergence is always faster in the case $\bf{x}_0 = \bf{x}^{\star}$, and this effect becomes more pronounced for the $\alpha$ which result in slower convergence. \textbf{(2)} The convergence of the backward pass always lags behind that of the forward pass in the case where $\bf{x}_0 = \bf{\eta}$, and this effect also becomes more pronounced as the choice of $\alpha$ leads to slower convergence. 

Unfolding in the case when $\bf{x}_0 = \bf{x}^{\star}$ is  referred to as \emph{fixed-point unfolding}. While its backward pass tends to converge faster than that of general unfolding, its requirement of both the precomputed solution $\bf{x}^{\star}$ and the unfolded iterations \ref{eq:opt_iteration} make it impractical in terms of efficiency. However, as shown next, fixed-point unfolding forms an important conceptual starting point for understanding the backward-pass convergence of unfolded optimization in general, and for the more efficient \emph{folded} optimization system introduced in Section \ref{sec:FoldedOptimization}.

\subsection{Backward Convergence of Fixed-Point Unfolding}
\label{sec:convergence}
Next, it will be shown that backpropagation of unfolded optimization at a fixed point is equivalent to solving a linear system of equations for the backpropagated gradients, using a particular iterative method for linear systems. In order to prove this, the following two Lemmas respectively identify the iterative solution method, and the linear system it solves. The following textbook result can be found, e.g., in \citep{quarteroni2010numerical}.

\begin{lemma}
\label{lemma:linear-iteration}
Let $\mathbf{B} \in \mathbb{R}^{n \times n}$ and $\mathbf{b} \in \mathbb{R}^{n}$.  For any $\mathbf{z}_0 \in \mathbb{R}^n$, the iteration 
\begin{equation}
\label{eq:linear-iteration} \tag{LFPI}
        \mathbf{z}_{k+1}  =
         \mathbf{B} \mathbf{z}_{k} + 
        \mathbf{b} 
\end{equation}
converges to the solution $\mathbf{z}$ of the linear system
\( \mathbf{z} = \mathbf{B} \mathbf{z} +  \mathbf{b} \)
whenever $\mathbf{B}$ is nonsingular and has spectral radius
$\rho(\mathbf{B}) < 1$. 
Furthermore, the asymptotic convergence rate for $\mathbf{z}_k \to \mathbf{z}$ is 
\begin{equation}
\label{eq:convergence-rate}
    -\log\; \rho(\mathbf{B}) .
\end{equation}
\end{lemma}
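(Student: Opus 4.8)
The plan is to reduce the convergence question to the asymptotic behavior of the matrix powers $\mathbf{B}^k$, and then to invoke the spectral characterization of when such powers vanish. First I would establish that the fixed-point equation $\mathbf{z} = \mathbf{B}\mathbf{z} + \mathbf{b}$ has a unique solution. Rewriting it as $(\mathbf{I} - \mathbf{B})\mathbf{z} = \mathbf{b}$, the hypothesis $\rho(\mathbf{B}) < 1$ guarantees that $1$ is not an eigenvalue of $\mathbf{B}$, so $\mathbf{I} - \mathbf{B}$ is nonsingular and the unique solution is $\mathbf{z} = (\mathbf{I} - \mathbf{B})^{-1}\mathbf{b}$. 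This settles existence and uniqueness of the target $\mathbf{z}$, and shows that the spectral condition, rather than nonsingularity of $\mathbf{B}$ itself, is what matters here.

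Next I would analyze the error $\mathbf{e}_k := \mathbf{z}_k - \mathbf{z}$. Subtracting the identity $\mathbf{z} = \mathbf{B}\mathbf{z} + \mathbf{b}$ from the iteration $\mathbf{z}_{k+1} = \mathbf{B}\mathbf{z}_k + \mathbf{b}$ cancels the constant term $\mathbf{b}$ and yields the homogeneous recursion $\mathbf{e}_{k+1} = \mathbf{B}\mathbf{e}_k$, hence $\mathbf{e}_k = \mathbf{B}^k \mathbf{e}_0$ for every $k$. Since $\mathbf{z}_0$, and therefore $\mathbf{e}_0$, is arbitrary, convergence $\mathbf{z}_k \to \mathbf{z}$ from every starting point is equivalent to the statement $\mathbf{B}^k \to \mathbf{0}$ as $k \to \infty$.

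The crux of the argument, and the step I expect to be the main obstacle, is to show that $\mathbf{B}^k \to \mathbf{0}$ precisely when $\rho(\mathbf{B}) < 1$. I would handle this through the Jordan canonical form $\mathbf{B} = \mathbf{P}\mathbf{J}\mathbf{P}^{-1}$, so that $\mathbf{B}^k = \mathbf{P}\mathbf{J}^k\mathbf{P}^{-1}$ and it suffices to control $\mathbf{J}^k$. The $k$-th power of a Jordan block with eigenvalue $\lambda$ has entries of the form $\binom{k}{j}\lambda^{k-j}$, i.e. polynomials in $k$ times powers of $\lambda$; since every eigenvalue obeys $|\lambda| \le \rho(\mathbf{B}) < 1$, the geometric decay dominates the polynomial growth and each entry tends to zero, giving $\mathbf{B}^k \to \mathbf{0}$. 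An alternative, norm-based route avoids the Jordan form entirely: for any $\epsilon > 0$ there exists an induced operator norm with $\|\mathbf{B}\| \le \rho(\mathbf{B}) + \epsilon$, and choosing $\epsilon$ so that $\rho(\mathbf{B}) + \epsilon < 1$ gives $\|\mathbf{B}^k\| \le (\rho(\mathbf{B}) + \epsilon)^k \to 0$ by submultiplicativity.

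Finally, for the asymptotic rate I would appeal to Gelfand's formula $\lim_{k\to\infty} \|\mathbf{B}^k\|^{1/k} = \rho(\mathbf{B})$, valid for any submultiplicative matrix norm. Taking logarithms, the average per-iteration decay of the error satisfies $\tfrac{1}{k}\log\|\mathbf{B}^k\| \to \log \rho(\mathbf{B})$, so the asymptotic convergence rate, defined as $-\lim_{k\to\infty}\tfrac{1}{k}\log\|\mathbf{B}^k\|$, equals $-\log\rho(\mathbf{B})$, as claimed. The Jordan picture also makes the rate transparent: the dominant contribution to $\|\mathbf{B}^k\|$ scales like $k^{m-1}\rho(\mathbf{B})^k$, where $m$ is the size of the largest Jordan block attached to an eigenvalue of modulus $\rho(\mathbf{B})$, and the polynomial prefactor contributes nothing to $\tfrac{1}{k}\log\|\mathbf{B}^k\|$ in the limit.
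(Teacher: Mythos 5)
Your proof is correct. The paper does not actually prove this lemma---it cites it as a textbook result from Quarteroni et al.---and your argument (error recursion $\mathbf{e}_{k+1} = \mathbf{B}\mathbf{e}_k$, the equivalence of $\mathbf{B}^k \to \mathbf{0}$ with $\rho(\mathbf{B})<1$ via the Jordan form or an $\epsilon$-adapted induced norm, and Gelfand's formula for the rate) is precisely the standard proof found in that reference. Your side observation is also accurate: nonsingularity of $\mathbf{B}$ is not needed for convergence, only $\rho(\mathbf{B})<1$ is; the nonsingularity hypothesis merely ensures $\rho(\mathbf{B})>0$ so that the rate $-\log\rho(\mathbf{B})$ is finite.
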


\noindent 
Linear fixed-point iteration (LFPI) is a foundational iterative linear system solver, and can be applied to any linear system $\mathbf{A} \mathbf{x} \!=\! \mathbf{b}$ by rearranging 
$\mathbf{z} \!=\! \mathbf{B} \mathbf{z} \!+\! \mathbf{b}$ and identifying $\mathbf{A} \!=\! \mathbf{I} \!-\! \mathbf{B}$. 

Next, we exhibit the linear system which is solved for the desired gradients $\frac{\partial \mathbf{x}^{\star}}{\partial \mathbf{c}} (\mathbf{c})$ by unfolding at a fixed point. Consider the fixed-point  conditions of the iteration (\ref{eq:opt_iteration}):
\begin{equation}
    \label{eq:fixedpt_cond} \tag{FP}
    \mathbf{x}^{\star}(\mathbf{c}) = \mathcal{U}(\mathbf{x}^{\star}(\mathbf{c}), \; \mathbf{c} )
\end{equation}

Differentiating (\ref{eq:fixedpt_cond}) with respect to $\mathbf{c}$, we define the Jacobians $\mathbf{\Phi}$ and $\mathbf{\Psi}$:
\begin{equation}
\label{eq:diff_fixedpt_cond}
    \frac{\partial \mathbf{x}^{\star}}{\partial \mathbf{c}}(\mathbf{c}) = 
    \underbrace{{\frac{\partial \mathcal{U} }{\partial \mathbf{x}^{\star}} (\mathbf{x}^{\star}(\mathbf{c}), \; \mathbf{c} )}}_{\mathbf{\Phi}}
    	\cdot     
    \frac{\partial\mathbf{x}^{\star}}{\partial \mathbf{c}}(\mathbf{c}) + 
    \underbrace{\frac{\partial \mathcal{U} }{\partial \mathbf{c}} (\mathbf{x}^{\star}(\mathbf{c}), \; \mathbf{c} )}_{\mathbf{\Psi}}  
    ,
\end{equation}
by the chain rule and recognizing the implicit and explicit dependence of $\mathcal{U}$ on the independent parameters $\mathbf{c}$. Equation (\ref{eq:diff_fixedpt_cond}) will be called the \emph{differential fixed-point conditions}. 
Rearranging (\ref{eq:diff_fixedpt_cond}), the desired $\frac{\partial \mathbf{x}^{\star}}{\partial \mathbf{c}} (\mathbf{c})$ can be found in terms of $\mathbf{\Phi}$ and $\mathbf{\Psi}$ as defined above, to yield the system (\ref{eq:Lemma_fixedpt}) below.

The results discussed next are valid under the assumptions that $\mathbf{x}^{\star}\!\!:\!\mathbb{R}^n \!\to \!\mathbb{R}^n$ is differentiable in an open set $\mathcal{C}$, and Equation (\ref{eq:fixedpt_cond}) holds for $\mathbf{c} \in \mathcal{C}$. Additionally, $\mathcal{U}$ is assumed differentiable on an open set containing the point $(\mathbf{x}^{\star}( \mathbf{c} ), \mathbf{c})$. 

\begin{lemma}
\label{lemma:fixedpt-diff}
When $\mathbf{I}$ is the identity operator  and $\mathbf{\Phi}$ nonsingular, 
\begin{equation}
    \label{eq:Lemma_fixedpt}  \tag{DFP}
        (\mathbf{I} - \mathbf{\Phi} ) \frac{\partial \mathbf{x}^{\star}}{\partial \mathbf{c}} = \mathbf{\Psi} .
\end{equation}
\end{lemma}

The result follows from the Implicit Function theorem \citep{munkres2018analysis}. It implies that the Jacobian $\frac{\partial \mathbf{x}^{\star}}{\partial \mathbf{c}}$ can be found as the solution to a linear system once the prerequisite Jacobians $\mathbf{\Phi}$ and $\mathbf{\Psi}$ are found; these Jacobians correspond to backpropagation through the update function $\mathcal{U}$ at $\mathbf{x}^{\star}(\mathbf{c})$, with respect to $\mathbf{x}^{\star}$ and $\mathbf{c}$.

Using the above two Lemmas, the central result of the paper can be proved. Informally, it states that the backward pass of an iterative solver (\ref{eq:opt_iteration}), unfolded at a precomputed optimal solution $\mathbf{x}^{\star}(\mathbf{c})$, is equivalent to solving the linear equations (\ref{eq:Lemma_fixedpt}) using linear fixed-point iteration, as outlined in Lemma \ref{lemma:linear-iteration}. This perspective allows insight into the convergence properties of this backpropagation, including its convergence rate, and shows that more efficient algorithms can be used to solve (\ref{eq:Lemma_fixedpt}) in favor of its inherent LFPI implementation in unfolding.

The following results hold under the assumptions that 
the parameterized optimization mapping $\mathbf{x}^{\star}$ converges for certain parameters $\mathbf{c}$ through a sequence of iterates $\mathbf{x}_k(\mathbf{c}) \to \mathbf{x}^{\star}(\mathbf{c})$ using algorithm (\ref{eq:opt_iteration}), 
and that $\mathbf{\Phi}$ is nonsingular with a spectral radius $\rho(\mathbf{\Phi}) < 1$.

\begin{theorem}
\label{thm:unfolding_convergence_fixedpt}
The backward pass of an unfolding of algorithm (\ref{eq:opt_iteration}), starting at the point $\mathbf{x}_{k} = \mathbf{x}^{\star}$, is equivalent to linear fixed-point iteration on the linear system (\ref{eq:Lemma_fixedpt}), and will converge to its unique solution   at an asymptotic rate of
\begin{equation}
    - \log \rho(\mathbf{\Phi}).
\end{equation}

\end{theorem}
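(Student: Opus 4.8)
The plan is to reduce the theorem to a direct application of Lemmas \ref{lemma:linear-iteration} and \ref{lemma:fixedpt-diff}, with the only substantive work being to exhibit the backward pass as a \emph{constant-coefficient} linear recurrence. First I would differentiate the unrolled update $\mathbf{x}_{k+1} = \mathcal{U}(\mathbf{x}_k, \mathbf{c})$ with respect to $\mathbf{c}$ by the chain rule, recognizing both the implicit dependence through $\mathbf{x}_k$ and the explicit dependence on $\mathbf{c}$, to obtain
\[
\frac{\partial \mathbf{x}_{k+1}}{\partial \mathbf{c}} = \frac{\partial \mathcal{U}}{\partial \mathbf{x}}(\mathbf{x}_k, \mathbf{c}) \cdot \frac{\partial \mathbf{x}_{k}}{\partial \mathbf{c}} + \frac{\partial \mathcal{U}}{\partial \mathbf{c}}(\mathbf{x}_k, \mathbf{c}).
\]
This is exactly the Jacobian produced, iteration by iteration, by automatic differentiation of the unfolded loop, and it is the object whose limit Definition \ref{def:bwd_conv} concerns.

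The second step, which uses the fixed-point hypothesis crucially, is to observe that because $\mathbf{x}_0 = \mathbf{x}^{\star}$ and $\mathbf{x}^{\star} = \mathcal{U}(\mathbf{x}^{\star}, \mathbf{c})$ by \eqref{eq:fixedpt_cond}, the forward iterates are stationary: $\mathbf{x}_k = \mathbf{x}^{\star}$ for every $k$. Consequently the two coefficient Jacobians above are evaluated at the \emph{same} point $(\mathbf{x}^{\star}, \mathbf{c})$ for all $k$, so they equal the constant matrices $\mathbf{\Phi}$ and $\mathbf{\Psi}$ defined in \eqref{eq:diff_fixedpt_cond}. Writing $\mathbf{z}_k \coloneqq \frac{\partial \mathbf{x}_k}{\partial \mathbf{c}}$, the backward pass then obeys $\mathbf{z}_{k+1} = \mathbf{\Phi}\,\mathbf{z}_k + \mathbf{\Psi}$, which is precisely the iteration \eqref{eq:linear-iteration} of Lemma \ref{lemma:linear-iteration} with $\mathbf{B} = \mathbf{\Phi}$ and $\mathbf{b} = \mathbf{\Psi}$. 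This establishes the claimed equivalence between the backward pass and linear fixed-point iteration.

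The third step invokes the Lemmas to close the argument. By Lemma \ref{lemma:linear-iteration}, since $\mathbf{\Phi}$ is nonsingular with $\rho(\mathbf{\Phi}) < 1$, the sequence $\mathbf{z}_k$ converges to the unique solution of $\mathbf{z} = \mathbf{\Phi}\mathbf{z} + \mathbf{\Psi}$, equivalently $(\mathbf{I} - \mathbf{\Phi})\mathbf{z} = \mathbf{\Psi}$, at asymptotic rate $-\log \rho(\mathbf{\Phi})$. By Lemma \ref{lemma:fixedpt-diff}, that unique solution is exactly $\frac{\partial \mathbf{x}^{\star}}{\partial \mathbf{c}}$, so the backward pass converges to the true Jacobian at the stated rate, confirming Definition \ref{def:bwd_conv}. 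I would also note that Lemma \ref{lemma:linear-iteration} guarantees convergence from \emph{any} $\mathbf{z}_0$, so it is immaterial whether the precomputed $\mathbf{x}^{\star}$ is treated as a detached graph constant (giving $\mathbf{z}_0 = \mathbf{0}$) or is allowed to carry its own derivative.

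The main obstacle I anticipate is not the algebra but justifying the identification cleanly: I must argue that the per-iteration Jacobian accumulated by reverse-mode AD coincides with the mathematical object $\frac{\partial \mathbf{x}_k}{\partial \mathbf{c}}$ satisfying the forward Jacobian recurrence, and that the stationarity $\mathbf{x}_k = \mathbf{x}^{\star}$ holds \emph{exactly} rather than merely approximately, since the theorem idealizes the precomputed solution as an exact fixed point. It is precisely this exactness that renders the coefficients constant; for an arbitrary starting point the coefficient $\frac{\partial \mathcal{U}}{\partial \mathbf{x}}(\mathbf{x}_k, \mathbf{c})$ varies with $k$ through $\mathbf{x}_k$, the recurrence ceases to be a clean instance of \eqref{eq:linear-iteration}, and the convergence claim would instead demand a perturbation or asymptotic analysis.
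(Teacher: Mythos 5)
Your proposal is correct and follows essentially the same route as the paper's proof: differentiate the update rule to obtain the Jacobian recurrence, use the fixed-point stationarity $\mathbf{x}_k = \mathbf{x}^{\star}$ to freeze the coefficients at $\mathbf{\Phi}$ and $\mathbf{\Psi}$, and then apply Lemma \ref{lemma:linear-iteration} together with Lemma \ref{lemma:fixedpt-diff} to identify the limit and the rate. The paper additionally pins down the initial iterate as $\mathbf{J}_0 = \mathbf{\Psi}$ (since the precomputed $\mathbf{x}^{\star}$ is detached from the graph), whereas you correctly observe that the asymptotic claim is insensitive to this choice.
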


\begin{proof}
Since $\mathcal{U}$ converges given any parameters $\mathbf{c} \in \mathcal{C}$,  Equation (\ref{eq:fixedpt_cond}) holds for any $\mathbf{c} \in \mathcal{C}$. Together with the assumption the $\mathcal{U}$ is differentiable on a neighborhood of $(\mathbf{x}^{\star}( \mathbf{c} ),\; \mathbf{c})$,
\begin{equation}
        (\mathbf{I} - \mathbf{\Phi} ) \frac{\partial \mathbf{x}^{\star}}{\partial \mathbf{c}} = \mathbf{\Psi}
\end{equation}
holds by Lemma \ref{lemma:fixedpt-diff}. 
When (\ref{eq:opt_iteration}) is unfolded, its backpropagation rule can be derived by differentiating its update rule:
\begin{subequations}
\begin{align}
        \label{eq:diff_update_1}
        \frac{\partial}{\partial \mathbf{c}} \left[ \; \mathbf{x}_{k+1}(\mathbf{c}) \;\right]  &=   \frac{\partial }{\partial \mathbf{c}}  \left[ \; \mathcal{U}(\mathbf{x}_k (\mathbf{c}) ,\;  \mathbf{c} )   \;\right] \\
        \label{eq:diff_update_2}
        \frac{\partial \mathbf{x}_{k+1}}{\partial \mathbf{c}}\; (\mathbf{c}) &=          \frac{\partial \mathcal{U}}{\partial \mathbf{x}_k} \frac{\partial \mathbf{x}_k}{\partial \mathbf{c}} + 
        \frac{\partial \mathcal{U}}{\partial \mathbf{c}},
\end{align}
\end{subequations}
where all terms on the right-hand side are evaluated at $\mathbf{c}$ and $\mathbf{x}_k (\mathbf{c})$. Note that in the base case $k=0$, since in general $\mathbf{x}_{0}$ is arbitrary and does not depend on $\mathbf{c}$, $\frac{\partial \mathbf{x}_0}{\partial \mathbf{c}} = \mathbf{0}$ and  
\begin{equation}
\label{eq:initial-update-diff}
        \frac{ \partial \mathbf{x}_{1}}{\partial \mathbf{c}}(\mathbf{c}) =
        \frac{\partial \mathcal{U}}{\partial \mathbf{c}}(\mathbf{x}_{0}, \mathbf{c}).
\end{equation}
This holds also when $\mathbf{x}_0 \!=\! \mathbf{x}^{\star}$ w.r.t.~backpropagation of (\ref{eq:opt_iteration}), since $\mathbf{x}^{\star}$ is precomputed outside the computational graph of its unfolding.  Now since $ \mathbf{x}^{\star}$ is a fixed point of (\ref{eq:opt_iteration}), 
\begin{equation}
\label{eq:xstar_all_k}
        \mathbf{x}_k(\mathbf{c}) = \mathbf{x}^{\star}(\mathbf{c})  \;\;\; \forall k \geq 0,
\end{equation}
which implies 
\begin{subequations}
\begin{align}
\label{eq:du_dxstar_all_k}
\frac{\partial \mathcal{U}}{\partial \mathbf{x}_k}(\mathbf{x}_k( \mathbf{c}), \; \mathbf{c} ) &= \frac{\partial \mathcal{U}}{\partial \mathbf{x}^{\star}}(\mathbf{x}^{\star}( \mathbf{c}), \; \mathbf{c} ) = \mathbf{\Phi}, \;\;\; \forall k \geq 0\\ 
\label{eq:du_dxstar_all_k2}
\frac{\partial \mathcal{U}}{\partial \mathbf{c}}(\mathbf{x}_k( \mathbf{c}), \; \mathbf{c} ) &= \frac{\partial \mathcal{U}}{\partial \mathbf{c}}(\mathbf{x}^{\star}( \mathbf{c}), \; \mathbf{c} ) = \mathbf{\Psi}, \;\;\; \forall k \geq 0.
\end{align}
\end{subequations}
Letting $\mathbf{J}_{k} \!\coloneqq\! \frac{\partial \mathbf{x}_k}{\partial \mathbf{c}}(\mathbf{c})$, the rule (\ref{eq:diff_update_2}) for unfolding at a fixed-point $\mathbf{x}^{\star}$ becomes, along with initial conditions (\ref{eq:initial-update-diff}), 
\begin{subequations}
\label{eq:proof_backprop}
\begin{align}
        \mathbf{J}_{0} &=  \mathbf{\Psi}\\
        \mathbf{J}_{k+1} & =
         \mathbf{\Phi} \mathbf{J}_{k} + 
        \mathbf{\Psi}.
\end{align}    
\end{subequations}
The result then holds by direct application of Lemma \ref{lemma:linear-iteration} to (\ref{eq:proof_backprop}), recognizing $\mathbf{z}_k = \mathbf{J}_{k}$ , $\mathbf{B} = \mathbf{\Phi}$  and  $\mathbf{z}_0 = \mathbf{b} = \mathbf{\Psi}$.
\end{proof}

\noindent The following is a direct result from the proof of Theorem~\ref{thm:unfolding_convergence_fixedpt}.
\begin{corollary}
Backpropagation of the fixed-point unfolding consists of the following rule:
\begin{subequations}
\label{eq:update-diff-fixed}
\begin{align}
        \mathbf{J}_{0} &=  \mathbf{\Psi}\\
        \mathbf{J}_{k+1} & =
         \mathbf{\Phi} \mathbf{J}_{k} + 
        \mathbf{\Psi},
\end{align}    
\end{subequations}
where  $\mathbf{J}_{k} \coloneqq \frac{\partial \mathbf{x}_k}{\partial \mathbf{c}}(\mathbf{c})$.
\end{corollary}


Theorem \ref{thm:unfolding_convergence_fixedpt} specifically applies to the case where the initial iterate is the precomputed optimal solution, $\mathbf{x}_0 = \mathbf{x}^{\star}$. However, it also has implications for the general case where $\mathbf{x}_0$ is arbitrary. As the forward pass optimization converges, i.e. $\mathbf{x}_k \to \mathbf{x}^{\star}$ as $k \to \infty$, this case becomes identical to the one proved in Theorem \ref{thm:unfolding_convergence_fixedpt} and a similar asymptotic convergence result applies. If $\mathbf{x}_k \to \mathbf{x}^{\star}$ and $\mathbf{\Phi}$ is a nonsingular operator with $\rho(\mathbf{\Phi}) < 1$, the following result holds.
\begin{corollary}
\label{corr-unfolding-convergence}
When the parametric problem (\ref{eq:opt_generic}) can be solved by an iterative method of the form (\ref{eq:opt_iteration}) and the forward pass of the unfolded algorithm converges, the backward pass converges at an asymptotic rate that is bounded by $-\log \rho(\mathbf{\Phi})$.
\end{corollary}
The above results can help to explain the empirical convergence patterns of the illustrative example in the beginning of this Section.  First, they help explain  the difference in  forward and backward-pass convergence rates due to unfolded PGD  as shown in row $1$ of Figure \ref{fig:fwd_bwd_err}. Regardless of the convergence rate of its forward pass solution, the overall convergence rate of an unfolded optimization is limited by that of the LFPI implicity applied in its backward pass. It is also clear why the backward pass of unfolding converges faster when its forward pass is initialized at the optimal solution $\mathbf{x}_0 = \mathbf{x}^{\star}$: the correct $\mathbf{\Phi}$ and $\mathbf{\Psi}$ are exactly known at every iteration in this case, and backpropagation follows the rule \eqref{eq:update-diff-fixed}. In the typical case when $\mathbf{x}_0$ is chosen randomly, $\mathbf{\Phi}$ and $\mathbf{\Psi}$ are "moving targets" with respect to the LFPI iterations \ref{eq:update-diff-fixed}, so the convergence of LFPI in this case is bound to lag behind the convergence of $\mathbf{x}_k$ to $\mathbf{x}^{\star}$.

\section{Folded Optimization}
\label{sec:FoldedOptimization}

As noted in Section \ref{sec:illustrative_example}, the fixed-point unfolding approach to backpropagation is inefficient because it requires precomputation of $\mathbf{x}^{\star}$ along with additional unfolded iterations \eqref{eq:opt_iteration}. In principle, this inefficiency can be addressed by noting that the forward pass of each unfolded iteration \eqref{eq:opt_iteration} need not be recomputed at the fixed point $\mathbf{x}^{\star}$, since $\mathbf{x}_k(\mathbf{c}) = \mathbf{x}^{\star}(\mathbf{c})$ $\forall k\geq 0$ and $ \mathcal{U}( \mathbf{x}^{\star}(\mathbf{c}), \mathbf{c} ) = \mathbf{x}^{\star}(\mathbf{c})$. Thus we can iterate just its backward pass \eqref{eq:update-diff-fixed} repeatedly at $\mathbf{x}^{\star}(\mathbf{c})$, for which the requisite Jacobians $\mathbf{\Phi}$ and $\mathbf{\Psi}$ can be obtained by a just a \emph{single} application of the differentiable update step $\mathcal{U}$. This revised approach leads to the most basic variant of  \emph{fixed-point folding}. 

The essence of fixed-point folding is to use the computational graph of the update step $\mathcal{U}$ to backpropagate the function $\mathbf{c} \rightarrow \mathbf{x}^{\star}(\mathbf{c})$ by modeling and solving the linear system \eqref{eq:Lemma_fixedpt}, after the optimal solution $\mathbf{x}^{\star}(\mathbf{c})$ is separately furnished by any \emph{blackbox} optimization solver. This is in contrast to unrolled or unfolded optimization, which jointly solves for the optimal solution and its backpropagated gradients by repeated application of $\mathcal{U}$ with automatic differentiation enabled. The separation of the forward and backward pass algorithms, which are typically entangled across iterations of unfolding, is key to enabling several practical advantages as detailed below.

This Section describes a system called \emph{folded optimization}, which encompasses a variety of implementation strategies for fixed-point folding. The paper is also accompanied by an open-source PyTorch library called \texttt{fold-opt}, which provides practical implementations of folded optimization variants within a convenient user interface. Its function is to facilitate the conversion of unfolded optimization code into more efficient and reliable JgP-based differentiable optimization. To produce a differentiable mapping $\mathbf{c} \rightarrow \mathbf{x}^{\star}(\mathbf{c})$ in \texttt{fold-opt}, two elements are required: a differentiable step $\mathcal{U}$ of an iterative method which solves the problem \eqref{eq:opt_generic}, along with any (blackbox) optimization oracle which provides optimal solutions $\mathbf{x}^{\star}(\mathbf{c})$ given $\mathbf{c}$. Note that both of these elements are always available given any unfolded implementation of \eqref{eq:opt_iteration}: the former is equivalent to setting the number of unfolded iterations to one. The backpropagation algorithms employed by \texttt{fold-opt} are proposed next.

\subsection{Folded Optimization: Algorithms}

Given access to an optimization solver $\mathbf{c} \rightarrow \mathbf{x}^{\star}(\mathbf{c})$ and differentiable update step $\mathcal{U}$, the goal is to compute a  JgP mapping $\mathbf{g} \rightarrow \mathbf{g}^T  \mathbf{J}$ where  $\mathbf{g} = \frac{\partial \mathcal{L}}{\partial \mathbf{x}^{\star}}$ is the incoming gradient and the matrix $\mathbf{J} = \frac{\partial \mathbf{x}^{\star}(\mathbf{c}) }{\partial \mathbf{c} }$ solves the linear system $(\mathbf{I} - \mathbf{\Phi} ) \mathbf{J} = \mathbf{\Psi} $; thus $\mathbf{g}^T \mathbf{J} = \frac{\partial \mathcal{L}}{\partial \mathbf{c}}$. While the Jacobian matrices $\mathbf{\Phi}$, $\mathbf{\Psi}$ and $\mathbf{J}$ are not known explicitly, the products $\mathbf{g}^T \mathbf{\Phi}$ and $\mathbf{g}^T \mathbf{\Psi}$ can be computed by backpropagation of any vector $\mathbf{g}$ through the computational graph of $\mathcal{U}(\mathbf{x}^{\star}(\mathbf{c}),(\mathbf{c}))$ backward to $\mathbf{x}^{\star}(\mathbf{c})$ and $\mathbf{c}$, respectively. Thus, the backpropagation algorithms of \texttt{fold-opt} are designed to compute the desired mapping $\mathbf{g} \rightarrow \mathbf{g}^T  \mathbf{J}$, by using the available mappings $\mathbf{g} \rightarrow \mathbf{g}^T  \mathbf{\Phi}$ and $\mathbf{g} \rightarrow \mathbf{g}^T  \mathbf{\Psi}$, which can be obtained by calling $\mathcal{U}$ only once and saving its computational graph. The library implements three distinct approaches to this end, detailed next.

\subsubsection{Linear Fixed-Point Iteration}
The first variant of folded optimization mimics unfolding at the fixed point $\mathbf{x}^{\star}$ by solving a linear system for the product $\mathbf{g}^T \mathbf{J}$, using a variation of the LFPI algorithm \eqref{eq:linear-iteration}. By construction, it is algorithmically equivalent to the backpropagation of fixed-point unfolding \eqref{eq:update-diff-fixed}. 

To see how, write the backpropagation of the loss gradient $\frac{\partial \mathcal{L}}{\partial \mathbf{x}^{\star}}$ through $k$ unfolded steps of \eqref{eq:opt_iteration} at the fixed point $\mathbf{x}^{\star}$ as 
\begin{equation}
\label{eq:backprop_goal}
\frac{\partial \mathcal{L}}{\partial \mathbf{x}^{\star}}^T \left(   \frac{\partial \mathbf{x}^{k}(\mathbf{c})}{\partial \mathbf{c}}  \right) .
\end{equation}
We seek to compute the limit $\frac{\partial \mathcal{L}}{\partial \mathbf{c}} = \mathbf{g}^T \mathbf{J}$ where $\mathbf{g} = \frac{\partial \mathcal{L}}{\partial \mathbf{x}^{\star}}$, $\mathbf{J} \coloneqq \lim_{k \rightarrow \infty} \mathbf{J}_k  $  , and  $\mathbf{J}_k =  \frac{\partial \mathbf{x}^{k}(\mathbf{c})}{\partial \mathbf{c}} $  . Following the backpropagation rule \eqref{eq:update-diff-fixed}, the expression \eqref{eq:backprop_goal} is equal to 
\begin{subequations}
\begin{align}
    \mathbf{g}^T \mathbf{J}_k &=  \mathbf{g}^T \left( \mathbf{\Phi} \mathbf{J}_{k-1} + \mathbf{\Psi} \right) \\
    &=  \mathbf{g}^T \left( \mathbf{\Phi}^k \mathbf{\Psi} + \mathbf{\Phi}^{k-1} \mathbf{\Psi}  + \ldots + \mathbf{\Phi} \mathbf{\Psi}  + \mathbf{\Psi}   \right)
\end{align}
\end{subequations}

This expression can be rearranged as
\begin{equation}
\label{eq:rearrange}
    \mathbf{g}^T \mathbf{J}_k  =   \mathbf{v}_k^T \mathbf{\Psi} 
\end{equation}

\noindent where
\begin{equation}
\label{eq:def_v}
 \mathbf{v}_k^T \coloneqq \left(\mathbf{g}^T  \mathbf{\Phi}^k  + \mathbf{g}^T \mathbf{\Phi}^{k-1}   + \ldots + \mathbf{g}^T  \mathbf{\Phi}   + \mathbf{g}^T    \right)   .
\end{equation}
The sequence $\mathbf{v}_k$ can be computed most efficiently as
\begin{equation}
\label{eq:recursion_v}
  \mathbf{v}_k^T = \mathbf{v}_{k-1}^T \mathbf{\Phi} + \mathbf{g}^T  
\end{equation}
which identifies $ \mathbf{v} \coloneqq  \lim_{k \rightarrow \infty} \mathbf{v}_k  $ as the solution of the linear system
\begin{equation}
\label{eq:v_system}
  \mathbf{v}^T (\mathbf{I} - \mathbf{\Phi})   =   \mathbf{g}^T  
\end{equation}
under the conditions of Lemma \eqref{lemma:linear-iteration}, after transposing both sides of \eqref{eq:recursion_v} and \eqref{eq:v_system} . 

Once $\mathbf{v}^T$ is calculated by \eqref{eq:recursion_v}, the desired JgP is
\begin{equation}
\label{eq:gTJ}
  \mathbf{g}^T \mathbf{J} = \mathbf{v}^T \mathbf{\Psi}  .
\end{equation}

Thus the end result $\mathbf{g}^T \mathbf{J}$ is computed by iterating \eqref{eq:recursion_v} to find $\mathbf{v}$ which solves \eqref{eq:v_system}, and then applying \eqref{eq:gTJ}. The left matrix-vector product with respect to  $\mathbf{\Phi}$ in \eqref{eq:recursion_v} and $\mathbf{\Psi}$ in \eqref{eq:gTJ} can be computed by backpropagation through the computational graph of the update function $\mathcal{U}(\mathbf{x}^{\star}(\mathbf{c}), \; \mathbf{c})$, backward  to $\mathbf{x}^{\star}(\mathbf{c})$ and $\mathbf{c}$ respectively. Notice that in contrast to unfolding, this backpropagation method requires to store the computational graph only for a single iteration of the update step, rather than for the entire optimization routine consisting of many iterations. 

One remaining detail is to initialize the iterates \eqref{eq:recursion_v} by choosing $\mathbf{v}_0$. The choice of $\mathbf{v}_0$ does not affect aymptotic convergence of \eqref{eq:recursion_v}. However to make LFPI in \texttt{fold-opt} completely equivalent with the backpropagation of fixed-point unfolding, Equation \eqref{eq:initial-update-diff} shows that the initial iterate must be chosen as follows:

\begin{equation}
\label{eq:v0}
  \mathbf{v}_0 = \mathbf{g}^T \mathbf{\Psi} .
\end{equation}

\paragraph{\bf Empirical Illustration} The third row of Figure \ref{fig:fwd_bwd_err} shows the convergence pattern of backpropagation using \emph{fold-opt} in LFPI mode, on the illustrative example of Section \ref{sec:Unfolding_at_a_fixed_point}. As intended, its error curves are nearly identical to those of the second row, which result from unfolding projected gradient descent at its fixed point. Minute differences between the second and third rows of curves can be attributed to numerical floating-point error. 

It is apparent from Figure \ref{fig:fwd_bwd_err} that due to its equivalence with fixed-point unfolding, backpropagation by \texttt{fold-opt} LFPI inherits a dependence on the optimization parameter used to define $\mathcal{U}$, in this case the gradient descent stepsize $\alpha$. This can be explained by Theorem \ref{thm:unfolding_convergence_fixedpt}: the stepsize $\alpha$ affects $\mathcal{U}$ and thus $\mathbf{\Phi}$, along with its spectral radius. In turn, this determines the asymptotic convergence rate of backpropagation by LFPI. 

Figure \ref{fig:fixed_point_conv_3D_LFPI} gives an expanded view of this aspect, showing the relationship between stepsize $\alpha$, the spectral radius $\rho$, and error per iteration due to LFPI backpropagation in \texttt{fold-opt}. Note that Figure \ref{fig:fixed_point_conv_3D_LFPI} coincides with the third row of Figure \ref{fig:fwd_bwd_err}, for its four values of $\alpha$. Note the continuous relationship between $\alpha$ and the backward-pass convergence rate, which finds a global maximum within the range of alpha shown. The spectral radius $\rho(\Phi)$ is represented in color coding; as predicted by Theorem \ref{thm:unfolding_convergence_fixedpt}, it is minimized precisely where the convergence rate is maximized. Further, note that the backward pass fails to converge precisely as $\rho(\mathbf{\Phi})$ exceeds the value $1$: that is, the first black curve does not intercept the xy-plane. Taken together, these observations corroborate and provide empirical evidence for the main implications of Theorem \ref{thm:unfolding_convergence_fixedpt}.

\begin{figure}
    \centering
    \includegraphics[width=0.8\linewidth]{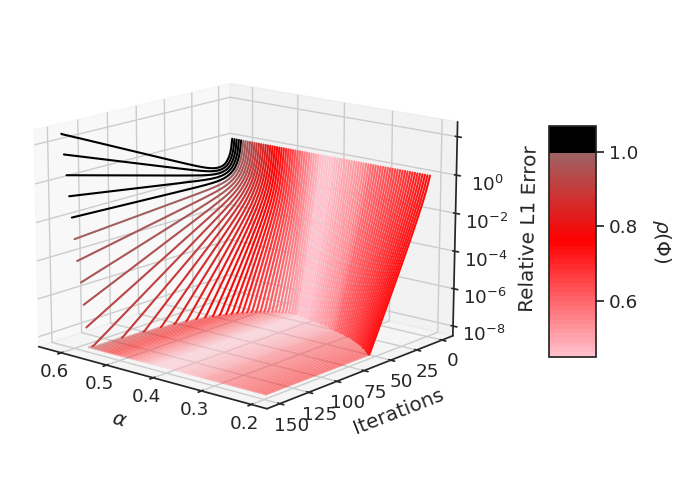} 

    \caption{An expanded view of Figure \ref{fig:fwd_bwd_err}'s third row shows backward-pass convergence for fixed-point folding of PGD by GMRes, compared to stepsize $\alpha$ and spectral radius  $\mathbf{\Phi}$ (color scale) on CIFAR100 Multilabel Classification. The main consequences of Theorem \ref{thm:unfolding_convergence_fixedpt} are illustrated: convergence rate is maximized when the spectral radius of $\mathbf{\Phi}$ is minimized, and failure to converge coincides with when the spectral radius exceeds $1$.}
    \label{fig:fixed_point_conv_3D_LFPI}
\end{figure}

\subsubsection{Krylov Subspace Methods}
\label{sec:GMRES}

The main drawback of backpropagation by LFPI, as described above, is its slow convergence rate. As a generic linear system solver, fixed-point iteration is typically not used in practice due to availability of faster-converging variants including Jacobi, Gauss-Seidel, Successive Over-Relaxation, and Krylov subspace methods \cite{quarteroni2010numerical}. The main advantage of LFPI, which justifies its use in the present context, is its compatibility with solution of the linear system \eqref{eq:Lemma_fixedpt} by matrix-vector products, since multiplication by $\mathbf{\Phi}$ and $\mathbf{\Psi}$ coincide with backpropagation through $\mathcal{U}$. This allows for the solution of \eqref{eq:Lemma_fixedpt} without explicitly computing $\mathbf{\Phi}$ and $\mathbf{\Psi}$.

Another, more efficient class of linear system solvers which share this characteristic are the Krylov subspace methods \cite{quarteroni2010numerical}. In order to solve a generic linear system $\mathbf{A} \mathbf{x} = \mathbf{b}$, these methods generally act on the basis vectors of a \emph{Krylov subspace}  generated by $A$ and a vector $r$:

\begin{equation}
\label{eq:Krylov_subspace}
  \mathcal{K}_k(\mathbf{A};\mathbf{r}) = \textit{span} \left( \{ \mathbf{r},\; \mathbf{A} \mathbf{r},\; \mathbf{A}^2 \mathbf{r},\; \ldots ,\; \mathbf{A}^{k-1} \mathbf{r}    \} \right),
\end{equation}

\noindent but otherwise do not require $\mathbf{A}$ explicitly. Therefore, they can be implemented in the present context to solve \eqref{eq:Lemma_fixedpt}, where only the matrix-vector products with respect to $\mathbf{\Phi}$ and $\mathbf{\Psi}$ are known. 

GMRes is the most popular of the general-purpose Krylov subspace methods. It solves at each $k^{th}$ iteration for the minimal-residual vector $\argmin_x \| \mathbf{A} \mathbf{x} - \mathbf{b} \|^2$ which lies within the $k^{th}$ Krylov subspace $\mathcal{K}_k(\mathbf{A};\mathbf{r})$, where $\mathbf{r} = \mathbf{A} \mathbf{x}_0 - \mathbf{b}$ is the residual of an initial guess. Arnoldi iteration is used to incrementally generate orthonormal bases for the Krylov subspaces, over which the minimial-residual vector can be computed by least-squares via QR decomposition. The decomposition is efficiently updated in order to resolve the least-squares problem at each iteration \cite{sauer2011numerical}.

The  \texttt{fold-opt} library implements a variant of GMRes to perform backpropagation as an alternative to LFPI. To see how, note that it was shown above how fixed-point unfolding can be interpreted as solving the equation \eqref{eq:v_system} for $\mathbf{v}$ by LFPI and then applying \eqref{eq:gTJ}. Here we follow the same pattern,  solving \eqref{eq:v_system} instead by GMRes. Since the algorithm is developed for left-sided linear systems, we transpose \eqref{eq:v_system} to make its orientation consistent with the typical GMRes formulation: 

\begin{equation}
\label{eq:v_system_T}
   (\mathbf{I} - \mathbf{\Phi})^T \mathbf{v}  =   \mathbf{g}  
\end{equation}

\noindent where $\mathbf{A} = (\mathbf{I} - \mathbf{\Phi})^T$. Now the Krylov subspace basis vectors can be computed as via multiplication of a vector $\mathbf{r}$ by  $(\mathbf{I} - \mathbf{\Phi})^T$ as follows:

\begin{equation}
\label{eq:krylov_mult}
   (\mathbf{I} - \mathbf{\Phi})^T \mathbf{r}  =   (\mathbf{r} - \mathbf{r}^T  \mathbf{\Phi})^T 
\end{equation}

where $\mathbf{r}^T  \mathbf{\Phi}$ is, once again, computed by backpropagation of $\mathbf{r}$ through $\mathcal{U}$ to $\mathbf{x}^{\star}$. The rest of the method follows a  conventional implementation.

The GMRes method typically converges in far fewer iterations than LFPI. This benefit comes at an additional cost of $\mathcal{O}(km)$ flops per each $k^{th}$ iteration, where $m$ is the size of $\mathbf{\Phi}$. Additionally, exact convergence of the backward pass is guaranteed within $m$ iterations \cite{sauer2011numerical}, at which point the Krylov subspace coincides with $\mathbb{R}^m$. This result is significant, because it highlights the inferior backward-pass convergence properties inherent to unrolling and unfolding optimization, which are limited to the $- \log \rho(\mathbf{\Phi})$ convergence rate of LFPI, by producing equivalent results with faster convergence.  

\paragraph{\bf Empirical Illustration}

The fourth and final row of Figure \ref{fig:fwd_bwd_err} shows the backward-pass error per iteration due to fixed-point folding with GMRes as described above. As expected, the backward pass converges in far fewer iterations when compared to the other backpropagation rules. Figure \ref{fig:fixed_point_conv_3D_GMRES} shows a more complete view of the relationship between PGD stepsize, spectral radius and convergence pattern. It can be compared directly to Figure \ref{fig:fixed_point_conv_3D_LFPI}. Note that backpropagation with GMRes is not subject to the same effect on convergence pattern due to changes in stepsize as in LFPI. This is because convergence of GMRes does not depend on $\mathbf{\Phi}$ being a contractive mapping with small spectral radius. Note in particular how convergence is reached in few iterations even when $\rho(\mathbf{\Phi}) > 1$ (in black). This is a significant improvement over unfolding and fixed-point folding with LFPI, since their backward-pass convergence rate depends on the spectral radius, whose relationship to optimization parameters such $\alpha$ can be difficult to calculate.

\begin{figure}
    \centering
    \includegraphics[width=0.7\linewidth]{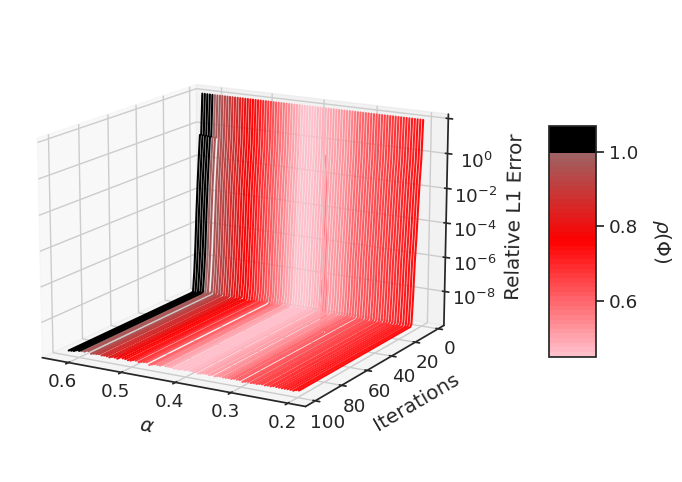} 
    \caption{An expanded view of Figure \ref{fig:fwd_bwd_err}'s third row shows backward-pass convergence for fixed-point folding of PGD by GMRes, compared to stepsize $\alpha$ and spectral radius  $\mathbf{\Phi}$ (color scale) on CIFAR100 Multilabel Classification. Because GMRes does not depend on iterating a contractive mapping with low spectral radius, convergence rates are unaffected by the stepsize of PGD used to backpropagate gradients.}
    \label{fig:fixed_point_conv_3D_GMRES}
\end{figure}

\subsubsection{Jacobian Extraction} A final alternative method to solve for the backpropagated gradients $\mathbf{g}^T \mathbf{J}$ is to solve the system \eqref{eq:v_system} directly, by first building the Jacobian matrix $\mathbf{\Phi}$. This is done by back-propagating the identity matrix through $\mathcal{U}$ backward to $\mathbf{x}^{\star}$. Subsequently, \eqref{eq:gTJ} is applied using backpropagation through $\mathcal{U}$ to $\mathbf{c}$. In this approach, any linear equation solver can be applied since the system \eqref{eq:v_system} is known explicitly. The \texttt{fold-opt} library allows the user to pass a blackbox linear system solver when this option is chosen.

The downside to this approach is the cost of building the matrix $\mathbf{\Phi}$. When $\mathbf{\Phi} \in \mathbb{R}^{m \times m}$, this requires backpropagating each of the $m$ columns of the identity matrix in addtion to the cost of solving \eqref{eq:v_system}. For comparison, backpropagation by GMRes as described in \ref{sec:GMRES} is guaranteed to reach full convergece within the same number iterations, each requiring one backward pass through $\mathcal{U}$.

\subsection{Folded Optimization: Practical Considerations}

The Section is concluded with a discussion of some practical aspects when using folded optimization. Here, emphasis is given to the potential pitfalls of unrolling and unfolding optimization, which are addressed in the folded optimization system.

\paragraph{Blackbox Optimization} One of the primary benefits of folded optimization is the ability to leverage blackbox optimization solvers to compute the forward-pass mapping $\mathbf{c} \rightarrow \mathbf{x}^{\star}(\mathbf{c})$. The ability to accomodate blackbox solvers is an important efficiency advantage that is precluded by unrolled optimization, since it requires the solver to be implemented in an AD environment. Most practical applications of optimization rely on highly optimized software implementations such as Gurobi \citep{gurobi}, which can incorporate problem-specific handcrafted heuristics as well as low-level code optimizations to minimize solving time. This is also a major advantage over the existing differentiable optimization library \texttt{cvxpy}, which requires converting the problem to a convex cone program before solving it with a specialized operator-splitting method for conic programming \citep{agrawal2019differentiable}, rendering it inefficient for many optimization problems.

\paragraph{Parameter Selection} Optimization methods typically require specification of parameters such as gradient stepsizes, which can be chosen as constants or adaptively at each iteration. Even when such parameters can be well-chosen for forward-pass convergence, the same values may not perform well for backward-pass convergence in unfolded optimization. This potential hazard of unfolding is illustrated in Figure \ref{fig:polyaks}, which again shares the illustrative example of PGD on multiclass selection. Here Polyak's adaptive stepsize rule is used, which guarantees convergence of PGD \cite{beck2017first}. However, since Polyak's rule decays the stepsize to zero, convergence of the backward pass slows over time, causing it to flatline at four orders of magnitude in error behind the forward pass. The equivalent result due to a constant stepsize (in dotted curves) serves to show how a constant, finite stepsize leads to much more efficient backward-pass convergence.  This highlights the importance of separating the forward and backward-pass models in fixed-point folding, so that convergence of both passes can be ensured.

\begin{figure}
    \centering
    \includegraphics[width=0.7\linewidth]{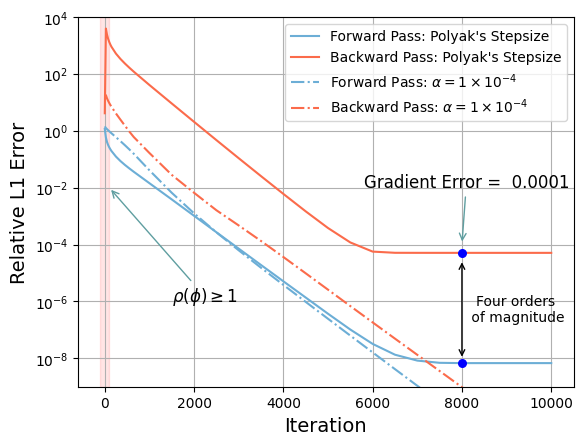} 
    \caption{Impact of Polyak's Stepsize Rule on forward and backward convergence when unfolding PGD on CIFAR100 Multilabel Classification. Convergence in the forward pass is guaranteed, but relies on decaying the stepsize asymptotically to zero, which causes failure to converge in the backward pass.}
    \label{fig:polyaks}
\end{figure}

\paragraph{Monitoring Backward Convergence} Error tolerance thresholds are often used to terminate optimization methods when sufficient accuracy is reached. In typical algorithm unrolling, it is not possible to monitor the   backward pass for termination by early stopping, since it is fully determined by the forward pass. 

\paragraph{Computational Graph Size} Since folded optimization requires the computational graph of $\mathcal{U}$ for only a single iteration at the fixed point $\mathbf{x}^{\star}$, its time and space efficiency are potentially much higher than that of the typical unrolled optimization, which stores a chain of computational graphs through $\mathcal{U}$ from an arbitrary starting point to the optimal solution at convergence.

\paragraph{Nested Fixed-Point Folding}
As noted in Section \ref{sec:unfolding}, an optimization method may require in its update step \ref{eq:opt_iteration} the solution of an optimization subproblem; when the subproblem itself requires an iterative method, this leads to nested unrolled loops. It is important to note that as per Definition \ref{def:unfolding}, the innermost optimization loop of a nested unrolling can be considered an unfolding and can be converted to fixed-point folding using the methods of this section. Subsequently, the next outermost loop can now be considered unfolded, and the same process applied until all unrolled loops are replaced solution of their respective analytical models. The process is exemplified by \textit{f-PGDb} (introduced in Section \ref{sec:Experiments}), which applies successive fixed-point folding through ADMM for and PGD (described in Section \ref{sec:unfolding}) to compose a JgP-based differentiable layer for any optimization problem with a smooth objective function and linear constraints. In particular, quadratic programming by ADMM is used to define $\mathcal{U}$ for a differentiable projection onto linear constraints, resulting in unfolded PGD. Then, fixed-point folding is applied again to replace the unfolded PGD loop.  This \textit{f-PGDb} module is used to backpropagate nonconvex quadratic programming, and neither ADMM nor PGD is used to compute the optimal solution in the forward pass. For this, Gurobi solver is used as a black box optimization oracle.

\section{Experiments}
\label{sec:Experiments}

This Section evaluates folded optimization on five different end-to-end optimization and learning tasks. It is primarily evaluated against \texttt{cvxpy}, which is the preeminent general-purpose differentiable optimization solver. Two crucial limitations of \texttt{cvxpy} are its efficiency and expressiveness. This is due to its reliance on transforming general optimization programs to convex cone programs, before applying a standardized operator-splitting cone program solver and differentiation scheme (see Section \ref{sec:RelatedWork}). This precludes the incorporation of problem-specific solvers in the forward pass  and limits its use to convex problems only. 
One major benefit of \texttt{fold-opt} is the modularity of its forward optimization pass, which can apply any blackbox algorithm to produce $\mathbf{x}^{\star}(\mathbf{c})$. In each experiment below, this is used to demonstrate a different advantage.

The experiments test four differentiable optimation modules implemented in \texttt{fold-opt}: 
\textbf{(1)} \textit{f-PGDa} applies to optimization mappings with linear constraints, and is based on folding projected gradient descent steps, 
where each inner projection is a QP solved by the differentiable QP solver \texttt{qpth} \citep{amos2019optnet}. 
\textbf{(2)} \textit{f-PGDb} is a variation on the former, in which the inner QP step is differentiated by fixed-point folding of the ADMM solver specified in \eqref{app:QP_ADMM}. 
\textbf{(3)} \textit{f-SQP} applies to optimization with nonlinear constraints and uses folded SQP with the inner QP differentiated by \texttt{qpth}. 
\textbf{(4)} \textit{f-FDPG} comes from fixed-point folding of the Fast Dual Proximal Gradient Descent (FDPG) shown in  \ref{appendix:models}. 
Its inner optimization step \eqref{eq:inner_optimization} is a soft thresholding $\operatorname{Prox}$ operator, whose simple closed form is differentiated by AD in PyTorch.

\paragraph{Decision-focused Learning Setting} The first three tasks of this Section follow the problem setting known as \emph{Decision-focused Learning}, or \emph{Predict-Then-Optimize}. Here, an optimization problem \eqref{eq:opt_generic} has unknown coefficients only in its objective function $f(\mathbf{x},\mathbf{c})$ while the constaints are considered constant. The goal of the supervised learning task is to predict $\hat{\mathbf{c}}$ from feature data such that the resulting $\mathbf{x}^{\star}(\hat{\mathbf{c}})$  optimizes the objective under ground-truth parameters $\bar{\mathbf{c}}$, which is $f(\mathbf{x}^{\star}(\hat{\mathbf{c}}),\bar{\mathbf{c}})$. This is equivalent to minimizing the \emph{regret} loss function:
\begin{equation} 
    \label{eq:regret_loss}
        \text{regret}(\hat{\mathbf{c}}, \bar{\mathbf{c}}) = f(\mathbf{x}^{\star}(\hat{\mathbf{c}}),\bar{\mathbf{c}}) - f(\mathbf{x}^{\star}(\bar{\mathbf{c}}),\bar{\mathbf{c}}),
\end{equation}
which measures the suboptimality, under ground-truth objective data, of decisions $\mathbf{x}^{\star}(\hat{\mathbf{c}})$ resulting from prediction $\hat{\mathbf{c}}$. 
Since the task amounts to predicting $\hat{\mathbf{c}}$ under ground-truth $\bar{\mathbf{c}}$, a \emph{two-stage} approach is also available which does not require backpropagation through $\mathbf{x}^{\star}$. In the two-stage approach, the loss function $\text{MSE}(\hat{\mathbf{c}},\bar{\mathbf{c}})$ is used to directly target ground-truth parameters, but the final test criteria is still measured by regret. Since the integrated training minimizes regret directly, it generally outperforms the two-stage.

\subsection{Decision-focused learning with nonconvex bilinear programming.}
\label{subsec:bilinear}
The first experiment showcases the ability of folded optimization to be applied in decision-focused learning with \emph{nonconvex} optimization. 
In this experiment, we predict the coefficients of a \emph{bilinear} program
 \begin{subequations}
\label{eq:bilinear}
\begin{align}
    \mathbf{x}^{\star}(\mathbf{c}, \mathbf{d}) = \argmax_{\mathbf{0} \leq \mathbf{x}, \mathbf{y}  \leq \mathbf{1} } &\;\;
    \mathbf{c}^T \mathbf{x} + \mathbf{x}^T \mathbf{Q} \mathbf{y} +\mathbf{d}^T \mathbf{y}    \\
    \texttt{s.t.} \;\; 
    & \sum \mathbf{x} = p, \; \sum \mathbf{y} = q,
\end{align}
\end{subequations}
in which two separable linear programs are confounded by a nonconvex quadratic objective term $\mathbf{Q}$. Costs $\mathbf{c}$ and $\mathbf{d}$ are unknown, while $p$ and $q$ are constants.
Such programs have numerous industrial applications such as optimal mixing and pooling in gas refining \citep{audet2004pooling}. 
Here we focus on the difficulty posed by the problem's form in decision-focused learning, and propose a task in which the unknown parameters $\mathbf{c}$ and $\mathbf{d}$ are correlated with known feature variables and predicted by a 5-layer network. The goal is to predict $\hat{\mathbf{c}}$ and $\hat{\mathbf{d}}$ from features, such that the suboptimality of $\mathbf{x}^{\star}(\hat{\mathbf{c}}, \hat{\mathbf{d}})$ with respect to ground-truth $\mathbf{c}$ and $\mathbf{d}$ is minimized.

It is known that PGD converges to local optima in nonconvex problems \citep{attouch2013convergence}, therefore the \textit{f-PGDb} module specified above is chosen used to backpropagate the solution of \ref{eq:bilinear} in end-to-end training.  Since PGD is not an efficient method for solving the foward-pass mapping \eqref{eq:bilinear}, the \texttt{fold-opt} implementation of this layer uses the Gurobi nonconvex QP solver to find its global optimum. We benchmark against the \emph{two-stage} approach, in which the costs $\mathbf{c}$, and $\mathbf{d}$ are targeted to ground-truth costs by MSE loss and the optimization problem is solved as a separate component from the learning task. In contrast, the integrated  \textit{f-PGDb} layer allows the model to minimize solution regret (i.e., suboptimality) directly as its loss function.

Feature and cost data are generated by the process described in \ref{appendix:experimental}.
In addition, $15$ distinct non-positive semidefinite $\mathbf{Q}$ are randomly generated so that the results of Figure \ref{fig:results_bilinear}(a) are reported on average over all $15$ nonconvex decision-focused learning tasks. Notice  in Figure \ref{fig:results_bilinear} how \textit{f-PGDb} achieves much lower regret for each of the $15$ nonconvex objectives. 

\begin{figure}
    \centering

     \includegraphics[width=0.6\textwidth]{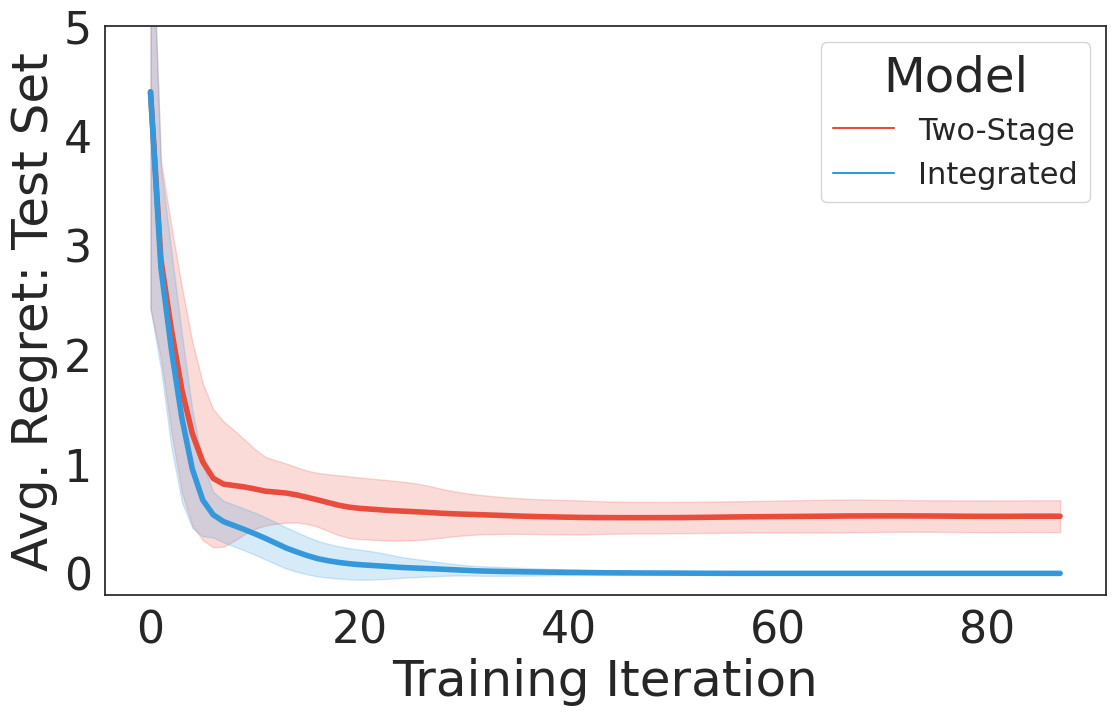} \\
    \caption{Learning Cost Factors in Bilinear Programming.}
    \label{fig:results_bilinear}
\end{figure}

\subsection{Cost Prediction for AC-Optimal Power Flow.}
\label{subsec:optimal_power_flow}

\begin{model}[!t]
	{
    \footnotesize
    \centering
    \begin{mdframed}
	\begin{flalign}
		&{\cO}(\mathbf{c}^l, \mathbf{c}^q) = 
		\textstyle \mathbf{\argmin}_{\mathbf{p^g},\mathbf{v}}\;\;
		 \mathbf{c}^l \cdot \mathbf{p^g} + \mathbf{c}^q \cdot (\mathbf{p^g})^2   && \label{ac_obj} \\
		&\mbox{ \textit{subject to:}} \notag\\
		&\hspace{6pt}
		\dot{v}^{\min}_i \leq v_i \leq \dot{v}^{\max}_i 		
		&& \!\!\!\!\!\forall i \in {\cal N} 		\label{con:2a} \tag{2a}\\
		&\hspace{6pt}
		\text{ -- }\dot{\theta}^\Delta_{ij} \leq \theta_i \text{ -- } \theta_j  \leq \dot{\theta}^\Delta_{ij} 	
		&& \!\!\!\!\!\forall (ij) \in {\cal E}  	 \label{con:2b}\!\!\!\!\! \tag{$\bar{2b}$}\\
		&\hspace{6pt}
		\dot{p}^{g\min}_i \leq p^g_i \leq \dot{p}^{g\max}_i 	
		&& \!\!\!\!\!\forall i \in {\cal N} 		\label{con:3a} \tag{$\bar{3a}$}\\
		&\hspace{6pt}
		\dot{q}^{g\min}_i \leq q^g_i \leq \dot{q}^{g\max}_i 	
		&& \!\!\!\!\!\forall i \in {\cal N} 		\label{con:3b} \tag{3b}\\
		&\hspace{6pt}
		(p_{ij}^f)^2 + (q_{ij}^f)^2 \leq \dot{S}^{f\max}_{ij}			
		&& \!\!\!\!\!\forall (ij) \in {\cal E}	\label{con:4}  \tag{$\bar{4}$}\\
		&\hspace{6pt}
		p_{ij}^f \!=\! \dot{g}_{ij} v_i^2 \text{--}  
		v_i v_j (\dot{b}_{ij} \!\sin(\theta_i \text{--} \theta_j)
		+ \dot{g}_{ij} \!\cos(\theta_i \text{--} \theta_j)\!)	
		&& \!\!\!\!\!\forall (ij)\!\in\! {\cal E} 	\label{con:5a} \tag{$\bar{5a}$}\\
		&\hspace{6pt} 
		q_{ij}^f \!=\! \text{--} \dot{b}_{ij} v_i^2 \text{--}  v_i v_j (\dot{g}_{ij} \!\sin(\theta_i \text{--} \theta_j)
		\text{--} \dot{b}_{ij} \!\cos(\theta_i \text{--} \theta_j)\!)	
		&& \!\!\!\!\!\forall (ij)\!\in\! {\cal E}		\label{con:5b} \tag{5b}\\
		&\hspace{6pt}
		p^g_i \text{ -- } \dot{p}^d_i = \textstyle \sum_{(ij)\in {\cal E}} p_{ij}^f	
		&& \!\!\!\!\!\forall i\in {\cal N} 		\label{con:6a} \tag{$\bar{6a}$}\\
		&\hspace{6pt}
		q^g_i \text{ -- } \dot{q}^d_i = \textstyle 	\sum_{(ij)\in {\cal E}} q_{ij}^f	
		&& \!\!\!\!\!\forall i\in {\cal N} 		\label{con:6b} \tag{6b}
    	\!\!\!\!\!\!\!\!\!\!\!\!\!\!\!\!\!\!\!\!
    	\notag 
	\end{flalign}
	\end{mdframed}
	}
	\captionof{model}{AC Optimal Power Flow (AC-OPF)}
	\label{model:ac_opf}
\end{model}

\begin{figure}
    \centering
    \includegraphics[width=0.6\linewidth]{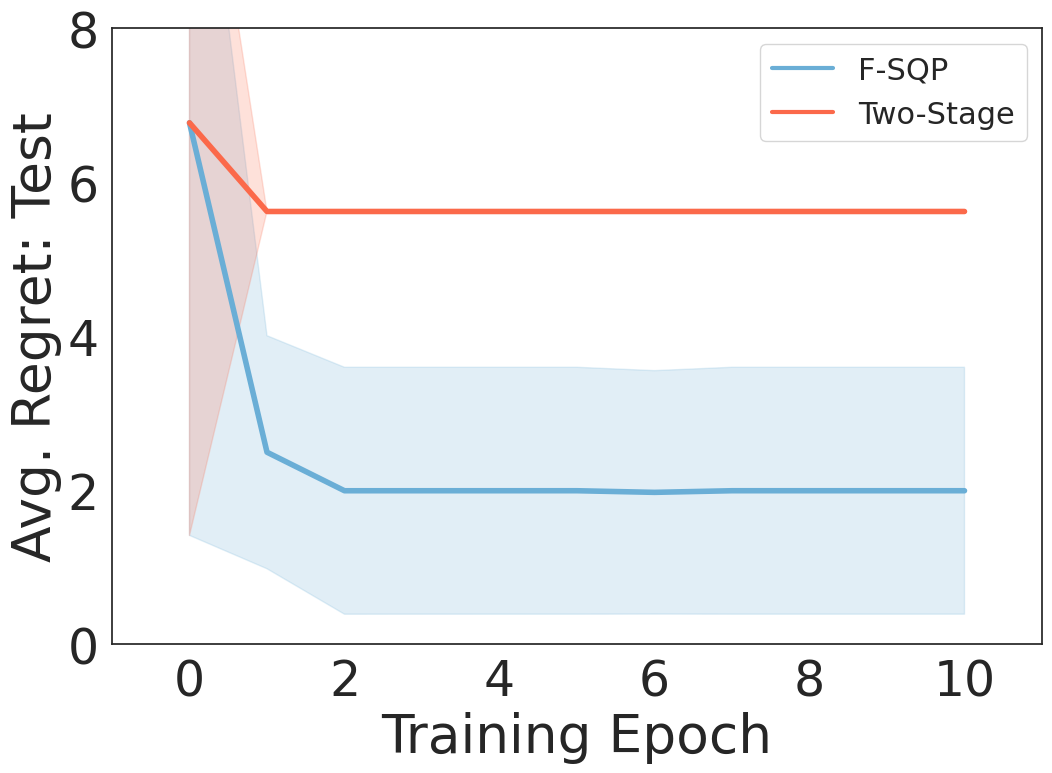} 
   \\
    \caption{Learning cost coefficients of the AC-OPF problem.}
    \label{fig:results_acopf}
\end{figure}

The \emph{AC-Optimal Power Flow} (AC-OPF) problem minimizes the cost of generator dispatch that satisfies the power system's physical and engineering constraints, as shown in Figure \ref{model:ac_opf}. In this learning task, the linear and quadratic power generation costs $\mathbf{c}^q$ and $\mathbf{c}^q$ are unknown and must be inferred from known features, such that the overall price of power generation under ground-truth costs is minimized. All other parameters of the optimization problem are held constant and obtained from the NESTA energy system test case ACOPF-$57$ \cite{coffrin2014nesta}.



A five-layer network is used to predict both sets of cost coefficients in Equation \ref{ac_obj}. Input features are composed of the previous day's temperature, current temperature, and a constant baseline cost vector. Given the predicted cost coefficients, non-convex non-linear solver Interior Point Optimizer, ~\texttt{ipopt}, is used to compute the optimal solution to \ref{ac_obj}. Given that PGD cannot handle non-linear constraints, the \textit{f-SQP} module is used solve for the backpropagated gradients for training by stochastic gradient descent. Figure \ref{fig:results_acopf} shows the relative regret on the test set after each training epoch, compared to a basic two-stage model.

\subsection{Portfolio Prediction and Optimization.}
\label{subsec:portfolio_experiment}
A classic problem which combines prediction with optimization is the Markowitz portfolio problem \cite{zhang2018portfolio}. Here, an investment portfolio must be partitioned to optimize total future return subject to risk constraints, while future asset prices are unknown and must be predicted.  This experiment represents a situation in which \texttt{cvxpy} makes non negligible errors in the forward pass of a problem with nonlinear constraints:
\begin{align}
\label{eq:porfolio}
    \mathbf{x}^\star(\mathbf{c}) = \argmax_{\mathbf{0} \leq \mathbf{x} } 
    \mathbf{c}^T \mathbf{x}\;\;
    \textsl{s.t.} \;
     \mathbf{x}^T \mathbf{V} \mathbf{x} \leq \gamma, \; \sum \mathbf{x} = 1.
\end{align}
This model describes a risk-constrained portfolio optimization where $\bm{V}$ is a covariance matrix, and the predicted cost coefficients $\mathbf{c}$ represent assets prices \citep{elmachtoub2020smart}. 
A  $5$-layer ReLU network is used to predict future prices $\bm{c}$ from exogenous feature data, and trained to minimize regret (the difference in profit between optimal portfolios under predicted and ground-truth prices) by integrating Problem (\ref{eq:porfolio}).
The folded \textit{f-SQP} layer used for this problem employs Gurobi QCQP solver in its forward pass. This again highlights the ability of \texttt{fold-opt} to accommodate a highly optimized blackbox solver. 

Figure \ref{fig:results_portfolio}   shows test set regret  throughout training, on three synthetically generated datasets of different nonlinearity degrees, following exactly the experimental settings of \cite{elmachtoub2020smart}. 
Notice the accuracy improvements of  \texttt{fold-opt} over ~\texttt{cvxpy}. 
Such dramatic differences can be explained by non-negligible errors made in \texttt{cvxpy}'s forward pass optimization on some problem instances, which occurs regardless of error tolerance settings; this may be due to ill-conditioning of the quadratic constraint in \eqref{eq:porfolio}. In contrast, Gurobi agrees to machine precision with a custom SQP solver, and solves about $50 \%$ faster than \texttt{cvxpy}. This shows the importance of highly accurate optimization solvers for accurate end-to-end training.

\begin{figure}
    \label{fig:portfolio_results}
    \centering
     \includegraphics[width=0.99\linewidth]{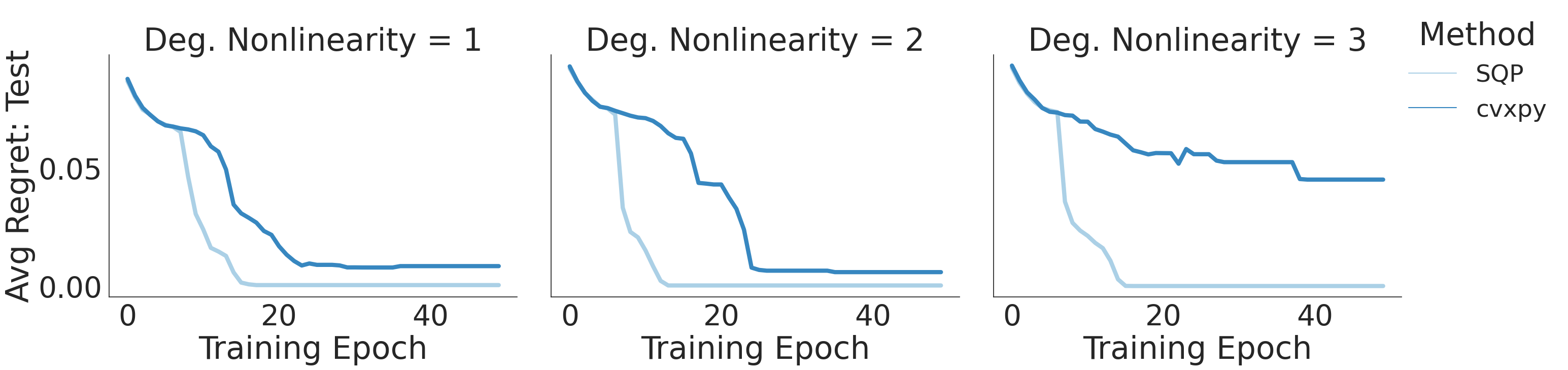} \\
    \caption{Learning asset prices for portfolio optimization.}
    \label{fig:results_portfolio}
\end{figure}

\subsection{Enhanced Total Variation Denoising.}
\label{subsec:denoising_experiment}
A classic application of proximal optimization models a denoising problem

\begin{equation}
    \label{eq:denoising}
    \mathbf{x}^{\star}(\mathbf{d}, \mathbf{D}) = \argmin_{\mathbf{x}} \;\;
    \frac{1}{2} \| \mathbf{x}-\mathbf{d} \|^2 +  \lambda \| \mathbf{D} \mathbf{x} \|_1,
\end{equation}


which seeks to recover the true signal $\mathbf{x}^{\star}$ from  a noisy input $\mathbf{d}$ and is often best handled by variants of Dual Proximal Gradient Descent \cite{beck2017first}. Typically, $\mathbf{D}$ is a pairwise differencing matrix so that $\| \mathbf{D} \mathbf{x} \|_1$ represents total variation. The objective function, which balances a combination of distance to the input signal $\mathbf{d}$ with a penalty on variation, aims to find $\mathbf{x}^{\star}$ which removes extraneous noise from $\mathbf{d}$. 
Here we initialize $\mathbf{D}$ to the classic differencing matrix and \emph{learn} a better operator by treating $\mathbf{D}$ as a learnable parameter.

Training data follows the experimental settings of \cite{amos2019optnet}, in which a set of $1$D signals is treated as target data and then perturbed by Gaussian noise to generate their corresponding noisy input data $\mathbf{d}$. MSE loss is used to target the true signals while $\mathbf{D}$ is learned. Figure \ref{fig:denoiser_plots}(a) shows MSE on the test set throughout training due to \textit{f-FDPG} for various choices of $\lambda$. Figure \ref{fig:denoiser_plots}(b)
shows comparable results from the differentiable QP framework of \cite{amos2019optnet}, which converts the problem \eqref{eq:denoising} to an equivalent QP problem:

\begin{subequations}
\label{eq:denoiser_QP}
\begin{align}
    \mathbf{x}^{\star}(\mathbf{D}) = \argmin_{\mathbf{x},\mathbf{t}} &\;\;
    \frac{1}{2} \| \mathbf{x}-\mathbf{d} \|^2 + \lambda \overrightarrow{\mathbf{1}} \mathbf{t}    \\
    \textit{s.t.} \;\;
    &    \;\;\;\;\;\;\;\;\;\;\;   \mathbf{D} \mathbf{x} \leq \mathbf{t} \\
    & -\mathbf{t} \leq \mathbf{D} \mathbf{x} 
\end{align}
\end{subequations}

in order to differentiably solve the denoising problem in \texttt{qpth}. Small differences in these results likely stem from solver error tolerance in the forward pass of each method. However, \textit{f-FDPG} computes $\mathbf{x}^{\star}(\mathbf{D})$ up to $40$ times faster, by using an optimization method \eqref{eq:FDPG_fwd} which is well-chosen for efficiently solving the denoising problem in its original form \eqref{eq:denoising}. 

\begin{figure}
     \centering
     \begin{subfigure}[b]{0.51\linewidth}
         \centering
         \includegraphics[width=\textwidth]{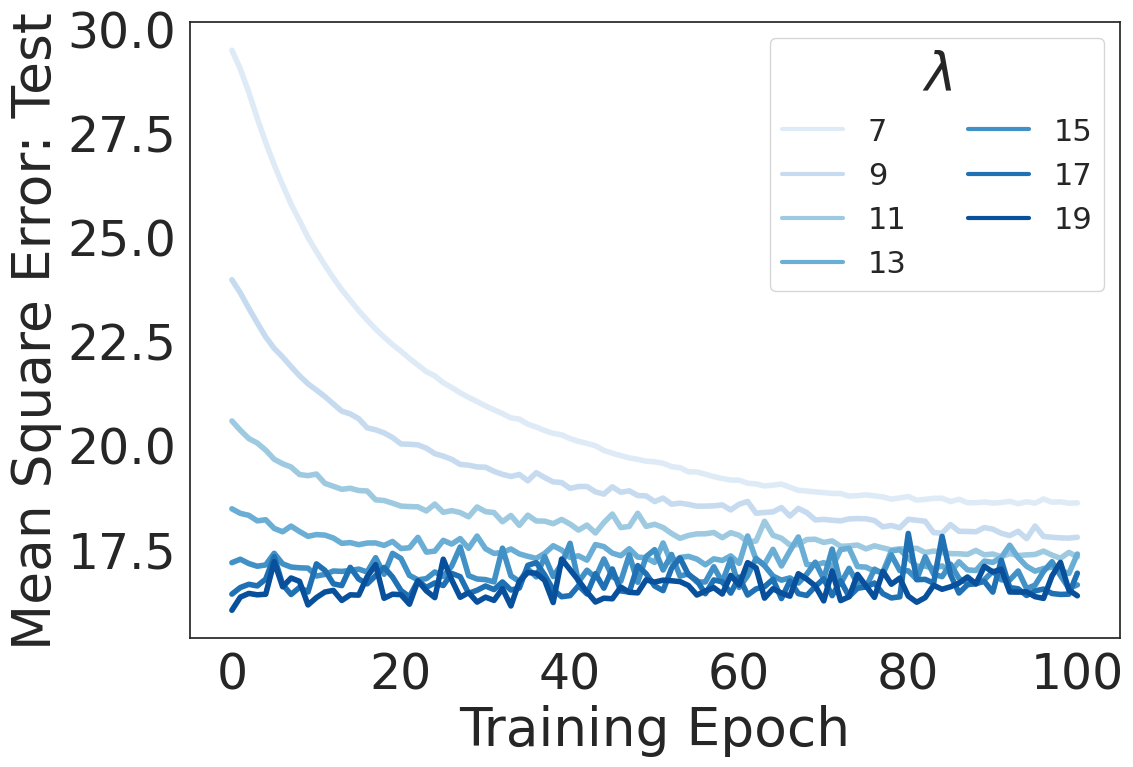}
         \caption{\textit{f-FDPG}}
     \end{subfigure}
     \hfill
     \begin{subfigure}[b]{0.48\linewidth}
         \centering
         \includegraphics[width=\textwidth]{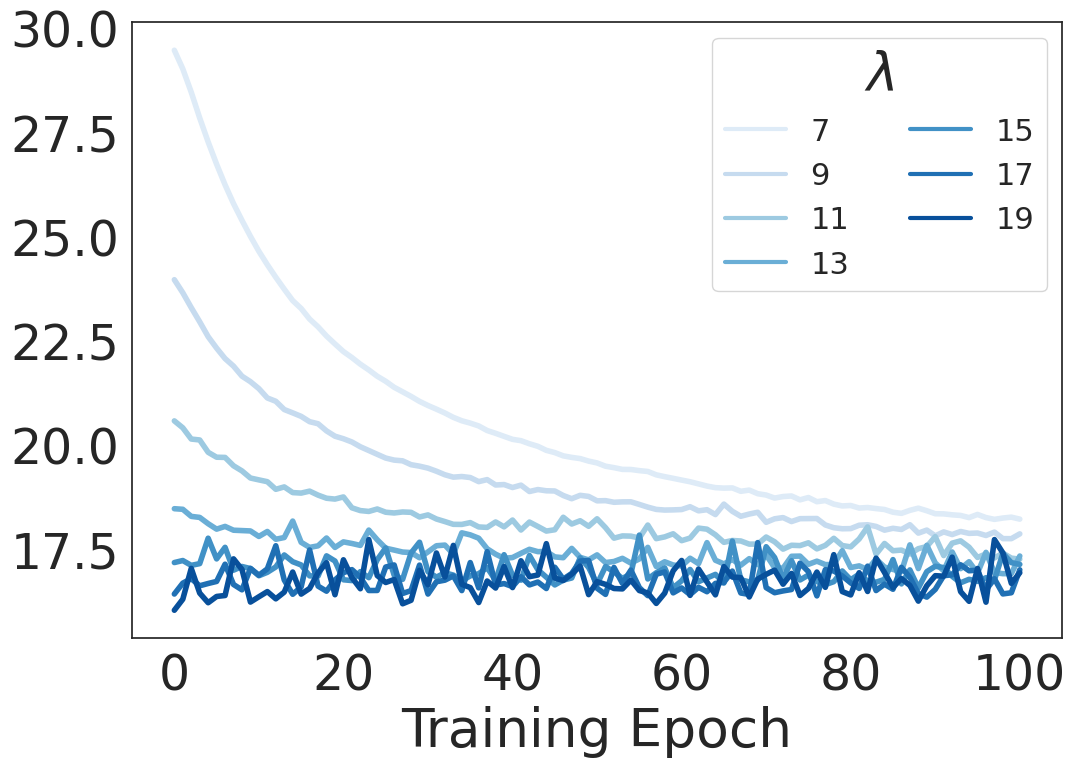}
         \caption{\texttt{qpth}}
     \end{subfigure}
     \hfill

    \caption{Enhanced Denoising Task: Test Set Loss}
     \label{fig:denoiser_plots}
\end{figure}

\subsection{Mutilabel Classification on CIFAR100.}
\label{subsec:multilabel_experiment}
Since the differential fixed-point conditions \eqref{eq:Lemma_fixedpt} depend on the chosen optimization method \eqref{eq:opt_iteration}, we compare the effect of different backpropagation rules in \texttt{fold-opt}, based on alternative choices of \eqref{eq:opt_iteration}.  This experiment compares the backpropagation of both \textit{f-PGDa} and \textit{f-SQP}  with that of \texttt{cvxpy}, since both PGD and SQP methods are suitable for solving the optimization  \eqref{eq:topk-lp}. Importantly, each \texttt{fold-opt} layer uses the same forward pass, implemented in \texttt{cvxpy}. This allows any potential descrepancies in empirical results to be attributed to differences in the backpropagation model.

The experimental task, adapted from \citep{Berrada2018SmoothLF}, learns a smooth top-5 classification model on noisy CIFAR-100. The optimization below maps image feature embeddings $\mathbf{c}$ from DenseNet  $40$-$40$ \citep{huang2017densely}, to smoothed top-$k$ binary class indicators (see \ref{appendix:experimental} 
for more details): 
\begin{align}
\label{eq:topk-lp}
    \mathbf{x}^{\star}(\mathbf{c}) \!=\! 
    \argmax_{\mathbf{0} \leq \mathbf{x} \leq \mathbf{1}} &\;\;
    \mathbf{c}^T \mathbf{x} + \sum_i x_i \log x_i \;\;
    \textsl{s.t.} \;
    \sum \mathbf{x} = k 
\end{align}
Figure \ref{fig:ml_plot} 
shows that all three models have indistinguishable classification accuracy throughout training, even after 30 epochs of training on 45k samples. On the other hand, the more sensitive test set shows marginal accuracy divergence between all three methods after a few epochs. This corresponds with a slightly less consistent increase in accuracy throughout training, in which none of the methods holds a clear advantage.

\begin{figure}
    \centering
    \includegraphics[width=0.8\linewidth]{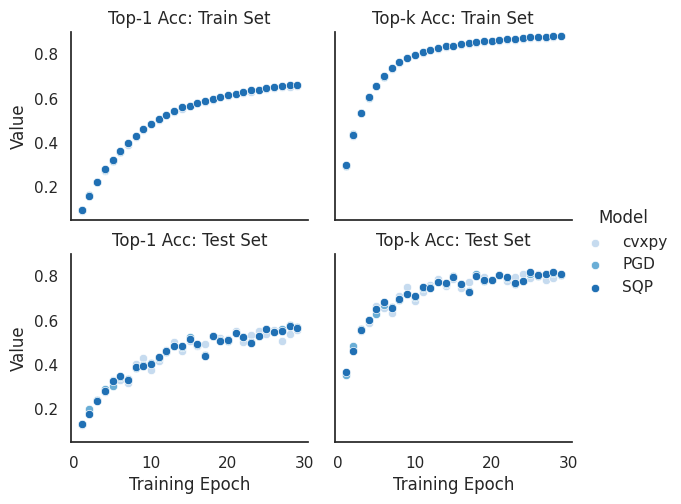} 
   \\
    \caption{Test and train set accuracy while training multilabel classification on CIFAR-100.}
    \label{fig:ml_plot}
\end{figure}

\section{Conclusions}

This paper introduced folded optimization, a framework for generating efficient and analytically differentiable optimization solvers from unrolled implementations. Theoretically, folded optimization was justified by a novel analysis of unrolled optimization at a precomputed optimal solution, which showed that its backward pass is equivalent to solution of a solver's differential fixed-point conditions, specifically by fixed-point iteration. This allowed for the convergence analysis of the backward pass, and evidence that the convergence could be improved by using superior linear system solvers. The paper showed that folded optimization offers substantial advantages over existing both unrolled optimization and existing differentiable optimization frameworks, including modularization of the forward and backward passes and the ability to handle nonconvex optimization.

\section*{Acknowledgements}
This research is partially supported by NSF grants 2345528, 2334936, 2334448 and NSF CAREER Award 2143706. Fioretto is also supported by an Amazon Research Award and a Google Research Scholar Award. Its views and conclusions are those of the authors only.

\bibliographystyle{elsarticle-num} 
\bibliography{bib}

\appendix

\section{Optimization Models}
\label{appendix:models}

\paragraph{Soft Thresholding Operator}

The soft thresholding operator defined below arises in the solution of denoising problems proximal gradient descent variants as the proximal operator to the $\| \cdot \|_1$ norm:
\[
    \mathcal{T}_{\lambda}(\mathbf{x}) = \left[  | \mathbf{x} | - \lambda \mathbf{e}  \right]_{+}  \cdot \textit{sgn} (\mathbf{x})
\]

\paragraph{Fast Dual Proximal Gradient Descent}
The following is an FDPG implementation from \cite{beck2017first}, specialized to solve the denoising problem 
\[
    \mathbf{x}^{\star}(\mathbf{D}) = \argmin_{\mathbf{x}} \;\;
    \frac{1}{2} \| \mathbf{x}-\mathbf{d} \|^2 +  \lambda \| \mathbf{D} \mathbf{x} \|_1,
\]
of Section \ref{sec:Experiments}. Letting $\mathbf{u}_k$ be the primal solution iterates, with $t_0=1$ and arbitrary $\mathbf{w}_0 = \mathbf{y}_0$:
\begin{subequations}
\label{eq:FDPG_fwd}
\begin{align}
    \mathbf{u}_k &= \mathbf{D}^T \mathbf{w}_k + \mathbf{d} \\
    \mathbf{y}_{k+1} &= \mathbf{w}_k - \frac{1}{4} \mathbf{D} \mathbf{u}_k + \frac{1}{4} \mathcal{T}_{4 \lambda} ( \mathbf{D} \mathbf{u}_k - 4 \mathbf{w}_k ) \\
    t_{k+1} &= \frac{1 + \sqrt{1+4 t_k^2}}{2} \\
    \mathbf{w}_{k+1} &= \mathbf{y}_{k+1} + \left( \frac{t_k - 1}{t_{k+1}} \right) (\mathbf{y}_{k+1} - \mathbf{y}_k)
\end{align}
\end{subequations}

\paragraph{Quadratic Programming by ADMM}
A Quadratic Program is an optimization problem with convex quadratic objective and linear constraints. The following ADMM scheme of \cite{boyd2011distributed} solves any quadratic programming problem of the standard form:
\begin{subequations}
\begin{align}
    \argmax_{x} &\;\;
    \frac{1}{2} \mathbf{x}^T \mathbf{Q} \mathbf{x} + \mathbf{p}^T \mathbf{x}\\
    \textit{s.t.} \;\; & \mathbf{A}\mathbf{x} = \mathbf{b}\\
     &\mathbf{x} \geq \mathbf{0}
\end{align}
\end{subequations}
by declaring the operator splitting
\begin{subequations}
\label{app:QP_ADMM}
\begin{align}
    \argmax_{\mathbf{x}} &\;\;
    f(\mathbf{x}) + g(\mathbf{z})\\
    \textit{s.t.} \;\; & \mathbf{x}=\mathbf{z}
\end{align}
\end{subequations}
with $  f(\mathbf{x}) = \frac{1}{2} \mathbf{x}^T \mathbf{Q} \mathbf{x} + \mathbf{p}^T \mathbf{x}  $, $dom(f) = \{ \mathbf{x}: \mathbf{A}\mathbf{x} = \mathbf{b} \} $, $  g(\mathbf{x}) = \delta( \mathbf{x} \geq 0 )  $ and where $\delta$ is the indicator function. 

This results in the following ADMM iterates: 
\begin{enumerate}
    \item Solve $\begin{bmatrix} \mathbf{P}+\rho \mathbf{I}&\mathbf{A}^T\\\mathbf{A}&\mathbf{0}\end{bmatrix} \begin{bmatrix} \mathbf{x}_{k+1} \\ \pmb{\nu}\end{bmatrix}   =  \begin{bmatrix}-\mathbf{q} + \rho(\mathbf{z}_k - \mathbf{u}_k)\\\mathbf{b}\end{bmatrix}  $
    \item $\mathbf{z}_{k+1} = (\mathbf{x}_{k+1} + \mathbf{u}_k)_+$
    \item $\mathbf{u}_{k+1} = \mathbf{u}_k + \mathbf{x}_{k+1} - \mathbf{z}_{k+1}$
\end{enumerate}
Where $(1)$ represents the KKT conditions for equality-constrained minimization of $f$,  $(2)$ is projection onto the positive orthant, and  $(3)$ is the dual variable update.

\paragraph{Sequential Quadratic Programming}
For an optimization mapping defined by Problem (\ref{eq:opt_generic}) where $f$, $g$ and $h$ are continuously differentiable, define the operator $\mathcal{T}$ as:
\begin{subequations}
\begin{align}
    \mathcal{T}(\mathbf{x},\pmb{\lambda}) = \argmin_{\mathbf{d}} &\;\;
     \nabla f(\mathbf{x})^T \mathbf{d} + \mathbf{d}^T  \nabla^2 \mathcal{L}(\mathbf{x},\pmb{\lambda}) \mathbf{d} \\
    \texttt{s.t.} \;\; &
    h(\mathbf{x}) + \nabla h(\mathbf{x})^T \mathbf{d} = \mathbf{0}\\
     &g(\mathbf{x}) + \nabla g(\mathbf{x})^T \mathbf{d} \leq \mathbf{0}
\end{align}
\end{subequations}
where dependence of each function on parameters $\mathbf{c}$ is hidden. The function $\mathcal{L}$ is a Lagrangian function of Problem (\ref{eq:opt_generic}).  Then given initial estimates of the primal and dual solution $(x_0,\lambda_0)$, sequential quadratic programming is defined by 
\begin{subequations}
\begin{align}
    (\mathbf{d},\pmb{\mu}) =  \mathcal{T}(\mathbf{x}_{k},\pmb{\lambda}_{k}) \label{line:1}\\
    \mathbf{x}_{k+1} = \mathbf{x}_k + \alpha_k \mathbf{d} \\
    \pmb{\lambda}_{k+1} =   \alpha_k( \pmb{\mu} - \pmb{\lambda}_k) 
\end{align}
\end{subequations}
Here, the inner optimization $\mathcal{O} = \mathcal{T}$ as in Section \ref{sec:unfolding}.

\section{Experimental Details}
\label{appendix:experimental}
Additional details for each experiment of Section \ref{sec:Experiments} are described in their respective subsections below. Note that in all cases, the machine learning models compared in Section \ref{sec:Experiments} use identical settings within each study, with the exception of the optimization components being compared.

\subsection{Nonconvex Bilinear Programming}

\paragraph{Data generation} Data is generated as follows for the nonconvex bilinear programming experiments. Input data consists of  $1000$ points $\in \mathbb{R}^{10}$ sampled uniformly in the interval $\left[-2, 2\right]$. To produce targets, inputs are fed into a randomly initialized $2$-layer neural network  with $\tanh$ activation, and gone through a nonlinear function $x \cos{2x}+ \frac{5}{2} \log{\frac{x}{x+2}} + x^2\sin{4x}$ to increase the nonlinearity of the mapping between inputs and targets. Train and test sets are split  $90 / 10$. 

\paragraph{Settings} A 5-layer NN with ReLU activation trained to predict cost $\mathbf{c}$ and $\mathbf{d}$. We train model with Adam optimizer on learning rate of $10^{-2}$ and batch size 32 for 5 epochs. Nonconvex objective coefficients Q are pre-generated randomly with 15 different seeds. Constraint parameters are chosen arbitrarily as $p=1$ and $q=2$. The average solving time in Gurobi is $0.8333$s, and depends per instance on the predicted parameters $\mathbf{c}$ and $\mathbf{d}$. However the average time tends to be dominated by a minority of samples which take up to $\sim 3$ min. This issue is mitigated by imposing a time limit in solving each instance. While the correct gradient is not guaranteed under early stopping, the overwhelming majority of samples are fully optimized under the time limit, mitigating any adverse effect on training. Differences in training curves under $10$s and $120$s timeouts are negligible due to this effect; the results reported use the $120$s timeout.

\subsection{AC-Optimal Power Flow}

\paragraph{Data Generation} A 57-node power system is used to generate our dataset. Specifications of generators, demand loads, and buses are adapted directly from the NESTA energy system test case \cite{coffrin2014nesta}. Cost coefficients are randomly perturbed from the original generator costs and altered by a non-linear function of the temperature variations $x \times (1 + \frac{| t_{previous} - t_{current} |}{100})$. Temperature variations are represented by the previous day temperature $t_{previous}$, sampled uniformly in the interval [20, 110], and the current day temperature $t_{current}$ which is computed by sampling the change in temperature normally with a mean of 0 and variation of 20. The demand loads are also modified by a non-linear function of the temperature variations $x + |\frac{t_{current} - 65}{45}|$. To train the model, $1000$ points were sampled to create the dataset with a $90$/$10$ train/test split. 

\paragraph{Settings} A five-layer ReLU network with hidden layer size $64$ is trained to predict generator costs $\mathbf{c} \in \mathbb{R}^{7 \times 3}$  using SGD optimizer with learning rate $10^{-2}$ and batch size $32$.

\subsection{Portfolio Optimization}

\paragraph{Data Generation} The data generation follows exactly the prescription of Appendix D in \cite{elmachtoub2020smart}. Uniform random feature data are mapped through a random nonlinear function to create synthetic price data for training and evaluation. A random matrix is used as a linear mapping, to which nonlinearity is introduced by exponentiation of its elements to a chosen degree. The studies in Section \ref{sec:Experiments} use degrees $1$, $2$ and $3$.

\paragraph{Settings} A five-layer ReLU network is trained to predict asset prices $\mathbf{c} \in \mathbb{R}^{20}$ using Adam optimizer with learning rate $10^{-2}$ and batch size $32$.

\subsection{Enhanced Denoising}

\paragraph{Data generation} The data generation follows \cite{amos2019optnet}, in which $10000$ random $1D$ signals of length $100$ are generated and treated as targets. Noisy input data is generated by adding random perturbations to each element of each signal, drawn from independent standard-normal distributions. A $90 / 10$ train/test split is applied to the data.

\paragraph{Settings} A learning rate of $10^{-3}$ and batch size $32$ are used in each training run. Each denoising model is initialized to the classical total variation denoiser by setting the   learned matrix of parameters $\mathbf{D} \in \mathbb{R}^{99 \times 100}$ to the differencing operator, for which $D_{i,i} = 1$ and $D_{i,i+1} = -1 \;\; \forall i$ with all other values $0$.

\subsection{Multilabel Classification}

\paragraph{Dataset} We follow the  experimental settings and implementation  provided by \cite{Berrada2018SmoothLF}. Each model is evaluated on the noisy top-5 CIFAR100 task. CIFAR-100 labels are organized into 20 “coarse” classes,
each consisting of 5 “fine” labels. With some probability, random noise is added to each label by resampling from the set of “fine”
labels. The $50$k data samples are given a $90 / 10$ training/testing split. 

\paragraph{Settings}
The DenseNet $40$-$40$ architecture is trained by SGD optimizer with learning rate $10^{-1}$ and batch size $64$ for $30$ epochs to minimize a cross-entropy loss function.

\end{document}